\newcommand{\ft}[2]{{\textstyle\frac{#1}{#2}}}
\newcommand{\bbox}{\lower.2ex\hbox{$\Box$}}
\newsavebox{\uuunit}
\newcommand{\dr}{\raise.3ex\hbox{$\stackrel{\leftarrow}{\delta  }$}{}}
\newcommand{\dl}{\raise.3ex\hbox{$\stackrel{\rightarrow}{\delta }$}{} }
\newcommand{\pl}{\raise.3ex\hbox{$\stackrel{\rightarrow}{\partial }$}{} }
\newcommand*{\Z}{\mathbb{Z}}
\newcommand*{\Q}{\mathbb{Q}}
\newcommand{\ignore}[1]{}
\newglossaryentry{integers}{name=\ensuremath{\Z},
  description={the ring of integers}, sort=!}
\newglossaryentry{rationals}{name=\ensuremath{\Q},
  description={the field of rational numbers}, sort=!, nonumberlist}
\newglossaryentry{vector-space}{name=\ensuremath{V},
  description={a vector space}, sort=V}
\newcommand\bovermat[2]{%
  \makebox[0pt][l]{$\smash{\overbrace{\phantom{%
    \begin{matrix}#2\end{matrix}}}^{\text{#1}}}$}#2}
\newcommand\partialphantom{\vphantom{\frac{\partial e_{P,M}}{\partial w_{1,1}}}}
\newtheorem{theorem}{Theorem}
\newtheorem{corollary}{Corollary}[theorem]
\newtheorem{lemma}[theorem]{Lemma}
\newtheorem{exmp}{Example}
\theoremstyle{definition}
\newtheorem{definition}{Definition}
\theoremstyle{remark}
\newtheorem*{remark}{Remark}
\begin{document}

\begin{titlepage}
\vspace{.5cm}
\begin{center}
\baselineskip=16pt
{\LARGE  A Step Towards Uncovering The Structure of Multistable Neural Networks  
}\\
\vfill
{\large  {\bf Magnus Tournoy}$^{1,2}$ and {\bf Brent Doiron}$^{1,2}$} \\
\vfill

{\small$^1$  Departments of Neurobiology and Statistics, University of Chicago, Chicago, IL, USA\\\smallskip
$^2$ Grossman Center for Quantitative Biology and Human Behavior, University of Chicago, Chicago, IL,
USA \\[2mm] }
\end{center}
\vfill
\begin{center}
{\bf Abstract}
\end{center}
{\small
We study how the connectivity within a recurrent neural network determines and is determined by the multistable solutions of network activity. To gain analytic tractability we let neural activation be a non-smooth Heaviside step function. This nonlinearity partitions the phase space into regions with different, yet linear dynamics. In each region either a stable equilibrium state exists, or network activity flows to outside of the region. The stable states are identified by their semipositivity constraints on the synaptic weight matrix. The restrictions can be separated by their effects on the signs or the strengths of the connections. Exact results on network topology, sign stability, weight matrix factorization, pattern completion and pattern coupling are derived and proven. Our work may lay the foundation for multistability in more complex recurrent neural networks. 
} \vfill

\hrule width 3.cm
{\footnotesize \noindent e-mails: tournoy@uchicago.edu,  bdoiron@uchicago.edu}
\end{titlepage}

\addtocounter{page}{1}

\printglossaries
\section{Introduction}
With experimental advances in the simultaneous recording of large populations of neurons \cite{stevenson2011advances,urai2022large}, the mathematical understanding of high dimensional nonlinear neural networks is of increasing interest \cite{vogels2005neural,yang2020artificial}. Ideally one would like to be able to relate the dynamical to the structural properties of the network and vice versa. However these relations are easily obscured by the complexity present in many biologically realistic network models. Models that are tractable, simple and yet rich in their dynamical scope offer frameworks in which the structure to function interdependence can be exactly formulated. They allow us to understand the limits/possibilities of network dynamics in more general systems.

In this article we focus on the existence of stable equilibrium solutions and the associated constraints on the structure of network connectivity. The presence of multiple stable equilibria, which can be thought of as stored/memorized patterns, critically depends on the nonlinear activation function which maps neuronal inputs to outputs \cite{grossberg1988nonlinear,hirsch1989convergent,
zhang2014comprehensive,cheng2006multistability}. In our recurrent neural circuit model we simplify the analysis by setting the neural activation function to be a Heaviside step function. In this infinite gain limit, the continuous-time Hopfield model \cite{hopfield1984neurons} and related circuit models \cite{amari1972characteristics,sompolinsky1988chaos,harris2015bifurcations} become what are known as Glass networks \cite{edwards2000analysis}. These were originally developed by Glass \cite{glass1978stable} to study Boolean networks with discrete switching dynamics, and in later years have been used to model both networks of neurons \cite{lewis1991steady,lewis1992nonlinear} and genes \cite{edwards2000combinatorial}. These studies have shown that, even though the nonlinear dynamics is restricted to \textit{switching manifolds} in phase space, the system exhibits complex dynamics such as steady states, limit cycles and chaos \cite{harris2015bifurcations,lewis1992nonlinear,filippov2013differential}. This suggests that the model offers a computational advantage by allowing us to study the nonlinear effects in a discrete and local manner without sacrificing dynamical richness/computational behavior. 

Notwithstanding the step function being a natural limit of the smooth sigmoidal activation present in many models, the nonsmooth activation functions that have been predominantly studied in the context of stability are linear with a rectification \cite{tang2007neural}, and hence lack any saturation. In this case, the conditions for multistability, global stability in terms of constraints on the symmetric weight matrix were derived by Hahnloser et al. \cite{hahnloser2000permitted}. These were expanded upon by many others, leading to exact results on the perturbative, topological and dynamical properties of threshold-linear networks (TLNs) \cite{wersing2001dynamical,yi2003multistability,tang2006dynamics,curto2012flexible,morrison2016diversity,curto2016pattern,zhang2018theoretical,curto2019fixed}. Recently the structure to function relation of the network was also explored by geometric analysis \cite{biswas2022geometric,curto2020combinatorial}.

Nevertheless, Glass networks are a fundamental class of threshold activated networks, in the sense that all the non-linear switching dynamics of the model class is fully captured by the Glass network. They are the simplest choice to study the nonlinear properties of continuous-time neural network dynamics while keeping the benefit of local linear behaviour. 

In our work we study the impact of the network connectivity on the multistable character of network solutions in Glass networks. Apart from a non-vanishing output constraint, no a priori assumptions are made on the connection weights of the network. Our attempt is to have a most general discussion.\\

The article is organized as follows: \\

In Section \ref{sec:def} we define the Glass network model and relevant mathematical objects. Next are the main theorems which are ordered into three different parts. \\

In Section \ref{sec:ss} we are concerned with the existence of stable steady states and the associated constraints on the weight matrix. We show that the presence of stable states imposes semipositivity constraints on the synaptic weight matrix. The consequences hereof can be divided into those that result from restrictions on the signs of the connections, i.e. the configurations/topology of the network and those that restrict the weights, i.e. the competition vs. cooperation between neurons. \\

In Section \ref{sec:sp} we focus on the consequences of the stable state condition on the configurations of the network. The matrix sign pattern classes that are necessary or sufficient for the existence of stable states are given and their consequences for network configurations are derived. The existence of sign stable states, i.e. states which are required to be stable by the sign pattern, exemplifies how neural networks can achieve stability independent from synaptic strengths and hence solely by their topology. Within the context of our network we proof that sign stable states are always minimally stable, i.e. have no stable substates.\\

In Section \ref{sec:fdi} we cover three distinct algebraic properties of Glass networks with multiple stable steady states: weight matrix factorization, stable state (de)composition and stable state coupling. 
\begin{itemize}
\item In Subsection \ref{subs:fac} we formulate how weight matrices of networks with stable states are identified by a unique semipositive matrix factorization. This factorization is a consequence of the geometrical properties of semipositive matrices, i.e. they are maps between proper polyhedral cones \cite{tsatsomeros2016geometric}.
\item In Subsection \ref{subs:dc} we give the necessary and sufficient conditions the connection strengths must satisfy in order for stable states to be (de)composable into stable (micro)macrostates. These results are of importance for the pattern completing capabilities of the network.
\item In Subsection \ref{subs:int} we show that the (de-)composition theorems turn out to be
derivable from a more general state coupling theorem that is a consequence of the Boolean logical structure of the Glass network. The theorem is of importance for the storage capacities of the network.
\end{itemize}

In Section \ref{sec:con} we end by giving a conclusion.

\section{Glass Networks} \label{sec:def}
We define the nonlinear neural network\footnote{In Appendix \ref{app:nioe} the equivalence of nonlinear inputs $\theta\left(x^i\right)$ and nonlinear outputs $\theta\left(W^i{}_j\, x^j\right)$ in the context of our work is explained.}
\begin{equation}
\dot x^i = -x^i+W^i{}_j\,\theta\left(x^j\right)\,.
\label{model}
\end{equation}
The index $i=1,\ldots,n$ runs over neural units, e.g. individual neurons or neuronal assemblies, that are parametrized by the variables $x^i\in\mathbb{R}$. The synaptic weights $w_{ij}$ are quantified by the matrix operator $W^i{}_j$. Throughout the paper we use Einstein notation meaning that indices appearing both ``up" and ``down" imply a summation of the corresponding components. The Heaviside step function
\begin{equation}
\theta\left(x^i\right)=
\begin{cases}
&1\qquad \text{for } x^i>0 \\
&0 \qquad \text{for } x^i\leq 0
\end{cases}
\,.
\label{hsf}
\end{equation}
partitions the dynamics for every $i\in[n]$ into two regions. The whole phase space therefore becomes a set of $2^n$ dynamically distinct orthants. From a set theoretic perspective the Heaviside step function is selecting a subset $\alpha\subset [n]$ for which the units are ``active". We can hence use $\alpha$ as an \textit{index} to construct the \textit{binary codes} 
\begin{equation}
p_\alpha^i\equiv
\begin{cases}
&1\qquad \text{for } i\in\alpha \\
&0 \qquad \text{for } i\not\in \alpha
\end{cases}
\,.
\end{equation}
The \textit{parts} of the partition themselves can then be defined in terms of these codes
\begin{equation}
{\cal P}_{\boldsymbol{\alpha}}\equiv\left\{x^i\,|\,\theta(x^i)=p_\alpha^i\right\}\,.
\label{defP}
\end{equation}
The following two conditions will be imposed on the network:
\begin{itemize}
\item \textbf{Embedding.} It is always possible to include an external input $\boldsymbol{\mu}$ to the system by having it embedded as a feedforward unit in the network
\begin{equation}
W=
\begin{pmatrix}
w_{ij} & \mu_i\\
0 & 1
\end{pmatrix}\,.
\end{equation}
The dynamics takes place on the hyperplane $x_{n}=1$. One can therefore restrict the analysis to this subset of the phase space.  
\item \textbf{Constraint. }We will assume that at any moment in time, the network is producing some output, be it by synaptic or external activation. This requires that for
\begin{itemize}
\item Vanishing external input:
\begin{equation}
W^i{}_j\,\theta(x^j)\neq 0\qquad \forall \boldsymbol{x}\in \mathbb{R}_+^n\setminus \left\{\boldsymbol{0}\right\}\, \label{nvi}
\end{equation}
\item Nonvanishing external input:
\begin{equation}
W^i{}_j\,\theta(x^j)\neq 0\qquad \forall \boldsymbol{x}\in \left\{\mathbb{R}_+^n\,|\,x_{n}=1\right\}\,.\label{nvi2}
\end{equation}
\end{itemize}
The nonzero output of the units will drive the system away from the boundaries between the parts. This is where some of the units are silent. As a consequence stable states will lie within the interior of the parts. We still allow for the completely ``inactive" state, i.e. $\boldsymbol{p_\emptyset}=\boldsymbol{0}$, to settle in the origin in the case of vanishing external input. In the case of an embedded external current the constraint is restricted to the hyperplane $x_n=1$.
\end{itemize}
\begin{exmp} \label{exmp1}
Suppose we have a $2$-dim network with vanishing external input $\boldsymbol{\mu}=\boldsymbol{0}$ and the following values for the weight matrix 
\begin{equation}
W=\begin{pmatrix}
1 & 4 \\
2 & 3 
\end{pmatrix}\,.
\end{equation}
The quadrants of the $2$-dim phase space are identified by the four index sets
\begin{equation}
\emptyset,\,\left\{1\right\},\,\left\{2\right\},\, \left\{1,2\right\}\,.
\end{equation}
They are all accompanied by their respective codes
\begin{equation}
\boldsymbol{p}_\emptyset=\begin{pmatrix}0\\ 0\end{pmatrix},\,
\boldsymbol{p}_{\left\{1\right\}}=\begin{pmatrix}1\\ 0\end{pmatrix},\,
\boldsymbol{p}_{\left\{2\right\}}=\begin{pmatrix}0\\ 1\end{pmatrix},\,
\boldsymbol{p}_{\left\{1,2\right\}}=\begin{pmatrix}1\\ 1\end{pmatrix}\,.
\end{equation}
Since that
\begin{equation} \label{fop1}
W\boldsymbol{p}_{\left\{1\right\}}=\begin{pmatrix}1\\ 2\end{pmatrix},\,
W\boldsymbol{p}_{\left\{2\right\}}=\begin{pmatrix}4\\ 3\end{pmatrix},\,
W\boldsymbol{p}_{\left\{1,2\right\}}=\begin{pmatrix}5\\ 5\end{pmatrix}
\end{equation}
are all nonzero, the constraint in $\left(\ref{nvi}\right)$ is satisfied.
\end{exmp}
\begin{exmp} \label{exmp2}
Suppose we have a $3$-dim network with one of the units functioning as an external input
\begin{equation}\label{W2}
W=\begin{pmatrix}
2 & 0 & -1 \\
0 & 2 & -1 \\
0  & 0 & 1
\end{pmatrix}\,.
\end{equation}
Because the third unit is active but fixed by the external source, the dynamics is constrained to the orthants where $x_3>0$. These parts correspond to the index sets
\begin{equation} 
\left\{3\right\},\,
\left\{1,3\right\},\,
\left\{2,3\right\}\,,
\left\{1,2,3\right\}
\end{equation}
with codes
\begin{equation}
\boldsymbol{p}_{\left\{3\right\}}=\begin{pmatrix}0\\ 0\\1\end{pmatrix},\,
\boldsymbol{p}_{\left\{1,3\right\}}=\begin{pmatrix}1\\ 0\\1\end{pmatrix},\,
\boldsymbol{p}_{\left\{2,3\right\}}=\begin{pmatrix}0\\ 1\\1\end{pmatrix},\,
\boldsymbol{p}_{\left\{1,2,3\right\}}=\begin{pmatrix}1\\ 1\\1\end{pmatrix}\,.
\end{equation}
Again one can easily check that the constraint in $\left(\ref{nvi2}\right)$ is satisfied
\begin{equation} \label{fop2}
W\boldsymbol{p}_{\left\{3\right\}}=\begin{pmatrix}-1\\ -1\\1\end{pmatrix},\,
W\boldsymbol{p}_{\left\{1,3\right\}}=\begin{pmatrix}1\\ -1\\1\end{pmatrix},\,
W\boldsymbol{p}_{\left\{2,3\right\}}=\begin{pmatrix}-1\\ 1\\1\end{pmatrix},\,
W\boldsymbol{p}_{\left\{1,2,3\right\}}=\begin{pmatrix}1\\ 1\\1\end{pmatrix}\,.
\end{equation}
\end{exmp}
\section{Stable States of Glass Network Activity}  \label{sec:ss}
\subsection{Internal Dynamics}
The first step to understanding the dynamics of \eqref{model} is to restrict one's focus to a single part ${\cal P}_\alpha$. Since all nonlinearities are pushed onto the boundaries, i.e. the axes of the orthants, the dynamics is linear in the interior of ${\cal P}_\alpha$. This is one of the strengths of using the Glass network model; it allows us to study nonlinear dynamics in a framework where almost all of the phase space is linearly driven. Yet the part to part dynamics exhibits rich behaviour emerging from the boundary nonlinearities. Within the part the nonlinear step function for $\boldsymbol{x_{\alpha}}\in {\cal P}_{\alpha}$ reduces to
\begin{equation}
\theta\left(x_{\alpha}^i\right)=p_\alpha^i\,.
\end{equation}
The internal dynamics becomes
\begin{equation}
\dot x_{\alpha}^i = -x_{\alpha}^i+W_{\alpha}^i\,.
\label{partdyn}
\end{equation}
where $\boldsymbol{W_{\alpha}}$ is the \textit{attractor point} of the part\footnote{Also called \textit{focal point} \cite{lewis1991steady} and \textit{virtual fixed point} \cite{curto2020combinatorial}.}
\begin{align}
W^i_{\alpha}&\equiv W^i{}_j\,p_\alpha^j\nonumber\\
&=\sum_{\text{columns: } i\,\in\, \alpha} W\,.\label{fcp}
\end{align} 
It symbolizes the effective strength of the connections and determines the direction of the flow in ${\cal P}_{\alpha}$. This can be readily seen from the solutions to \eqref{partdyn}, which are
\begin{equation}
x_\alpha^i(t)=W_\alpha^i+\left(x_\alpha^i(0)-W_\alpha^i\right)e^{-t}\,.
\label{eqm}
\end{equation}
Within each orthant the curves follow straight lines directed to the attractor point. At the boundaries sharp transitions can occur. These nonlinear jumps give rise to a plethora of rich dynamics; numerical integration of the equations of motion displays steady state, limit cycle and chaotic dynamics \cite{lewis1992nonlinear}.
\subsection{Fixed Points in Network Activity}
For the analysis of the fixed points one has to clarify the distinction between fixed points of the network dynamics and the attractor points of the parts. While within each part the system linearly flows towards a single attractor point
\begin{equation}
x_\alpha^*{}^i=W_{\alpha}^i\,,
\label{fp}
\end{equation}
this is only a fixed point of the system-wide dynamics when the flow ends within the part, namely when $\boldsymbol{x^*_\alpha}\in{\cal P}_{\alpha}$.\footnote{Sets that contain their attractor point have been called \textit{admissible} \cite{curto2020combinatorial}.} From the definitions \eqref{defP} and \eqref{fp}, one sees that the condition to be satisfied is
\begin{equation}
\theta\left(W^i_{\alpha}\right)=p_\alpha^i\,.
\label{fpcond}
\end{equation}
By the definition \eqref{hsf} of the step function and the constraint $\left(\ref{nvi},\ref{nvi2}\right)$ we have
\begin{equation}
 W^i_{\boldsymbol{\alpha}}>0 \text{ for }i\in\alpha \quad\text{\&}\quad W^i_{\boldsymbol{\alpha}}< 0 \text{ for }i\not\in\alpha
\label{fpcond2}
\end{equation}
or rewritten using the binary codes
\begin{equation}
 p_\alpha^i W^i_{\boldsymbol{\alpha}}>0 \quad\text{\&}\quad p_{\alpha^c}^i W^i_{\boldsymbol{\alpha}}< 0\,.
\label{fpcond2b}
\end{equation}
where we used $\boldsymbol{p}_{\alpha^c}=\boldsymbol{1}-\boldsymbol{p_\alpha}$ for the complementing code. Intuitively a part contains its attractor point whenever its active units are, on average, internally cooperative/excitatory (first inequality in \eqref{fpcond2}) but externally competitive/inhibitory (second inequality in \eqref{fpcond2}).

\subsection{Stable Network States}
In our model all fixed points lie within the interior of the parts. This is a consequence of the fixed point equation \eqref{fp} and the constraint $\left(\ref{nvi},\ref{nvi2}\right)$. The stability of the fixed points is therefore completely determined by the linear internal dynamics. From \eqref{partdyn} one immediately sees that for small enough perturbations the system stays within the part and gets dragged back to the fixed point by the negative leak term. Therefore \textbf{\textit{in our model all fixed points are stable}}. In the Glass network the study of stable states is in one-to-one correspondence with uncovering the set of fixed points. 
\begin{remark}
In the rest of the article we will call a part with stable fixed point a \textit{stable set}. When unambiguous we simplify the expressions by using the set ${\cal P}_\alpha$ interchangeably with the index $\alpha$.
\end{remark}
\subsection{Stable Set Condition} \label{sss:fpc}
The inequalities in \eqref{fpcond2} guarantee a stable fixed point. It is hence important to better understand the structure they impose on the weight matrix. As a first step let's introduce the \textit{signature of the part}
\begin{equation}
s^i_\alpha=p_\alpha^i-p_{\alpha^c}^i\,.
\label{signvec}
\end{equation}
By subtracting the two independent inequalities in \eqref{fpcond} we are led to the condition
\begin{equation}
x_*^i\in{\cal P}_\alpha \quad \Leftrightarrow\quad s_\alpha^i W_\alpha^i > 0\,.
\label{fpcond3}
\end{equation}
Secondly, we can write \eqref{fpcond3} into a matrix expression by defining the diagonal matrix form of the signature vector and the code
\begin{equation}
S_\alpha=\text{diag}\left(\boldsymbol{s}_\alpha\right)\quad \text{\&}\quad P_\alpha=\text{diag}\left(\boldsymbol{p}_\alpha\right)\,.
\end{equation}
These matrices are respectably known as signature and projection matrices. The inequality in \eqref{fpcond3} produces the following condition on the weight matrix
\begin{center}
\begin{tcolorbox}[title=Stable Set Condition, width=\linewidth/2]
\begin{equation}
S_\alpha W P_\alpha\,\boldsymbol{1}> \boldsymbol{0}
\label{fpcond4}
\end{equation}
\end{tcolorbox}
\end{center}
The inequality is a semipositivity condition \cite{fiedler1966some,vandergraft1972applications,johnson_smith_tsatsomeros_2020}. A matrix $A$ is said to be semipositive whenever there exists a vector $\boldsymbol{v}\geq \boldsymbol{0}$ for which $A\boldsymbol{v}>\boldsymbol{0}$.\footnote{Analogously a matrix $A$ is said to be \textit{seminonnegative} whenever there exists a vector $\boldsymbol{v}\geq \boldsymbol{0}$, $\boldsymbol{v}\neq \boldsymbol{0}$ for which $A\boldsymbol{v}\geq \boldsymbol{0}$.} The condition \eqref{fpcond4} is even more strict since it specifies the \textit{semipositivity vector}, namely $S_\alpha W P_\alpha$ must be positive for the ``all-ones vector" $\boldsymbol{1}$ or equivalenty $S_\alpha W$ must be positive for the code $\boldsymbol{p_\alpha}$. The constraint reveals two complementary ways by which the structure of connections leads to multistable dynamics:
\begin{itemize}
\item \textbf{Signs of connections.} There are certain combinations of signs for the row elements that are disallowed by \eqref{fpcond4}. For example rows with only inhibitory connections are disallowed for units that are ``active". This is evident because they have to sustain their activation against the leak term in \eqref{model}.
\item \textbf{Strengths of connections.} Within each row there may be a combination of inhibitory and excitatory connections. The relative strength of these connections will determine whether a unit stays ``active" or will be silenced.
\end{itemize}
The signs of the connections are necessary for \eqref{fpcond4} to be satisfied and can in some cases be sufficient. The strength of the connections can be necessary for certain sign patterns but are never sufficient by themselves. In the next sections we will derive theorems related to these two categories of constraints.

\begin{remark}
The case of $\boldsymbol{p_\alpha}=\boldsymbol{0}$ is the only exception to the condition \eqref{fpcond4}. The attractor point of the negative orthant is then the origin. To find out its stability a more general analysis is required. 
\end{remark}
\begin{exmp} \label{exmp12}
We continue with Example \ref{exmp1}. The attractor points of the parts are
\begin{equation}
\boldsymbol{W}_{\emptyset}=\begin{pmatrix}0\\0\end{pmatrix},\,
\boldsymbol{W}_{\left\{1\right\}}=\begin{pmatrix}1\\2\end{pmatrix},\,
\boldsymbol{W}_{\left\{2\right\}}=\begin{pmatrix}4\\3\end{pmatrix},\,
\boldsymbol{W}_{\left\{1,2\right\}}=\begin{pmatrix}5\\5\end{pmatrix}\,.
\end{equation}
Only $\boldsymbol{W}_{\left\{1,2\right\}}$ satisfies the stable set condition
\begin{equation}
S_{\left\{1,2\right\}}\boldsymbol{W}_{\left\{1\right\}}=
\begin{pmatrix}
1 & 0\\
0 & 1 
\end{pmatrix}
\begin{pmatrix}
5 \\
5
\end{pmatrix}
>\boldsymbol{0}
\,.
\end{equation}
The network has one stable fixed point in the orthant ${\cal P}_{\left\{1,2\right\}}$ (Figure \ref{fig:VPex1}). 
\begin{figure}[ht]
\centering
  \captionsetup{width=.8\textwidth}
\includegraphics[width=0.6\textwidth]{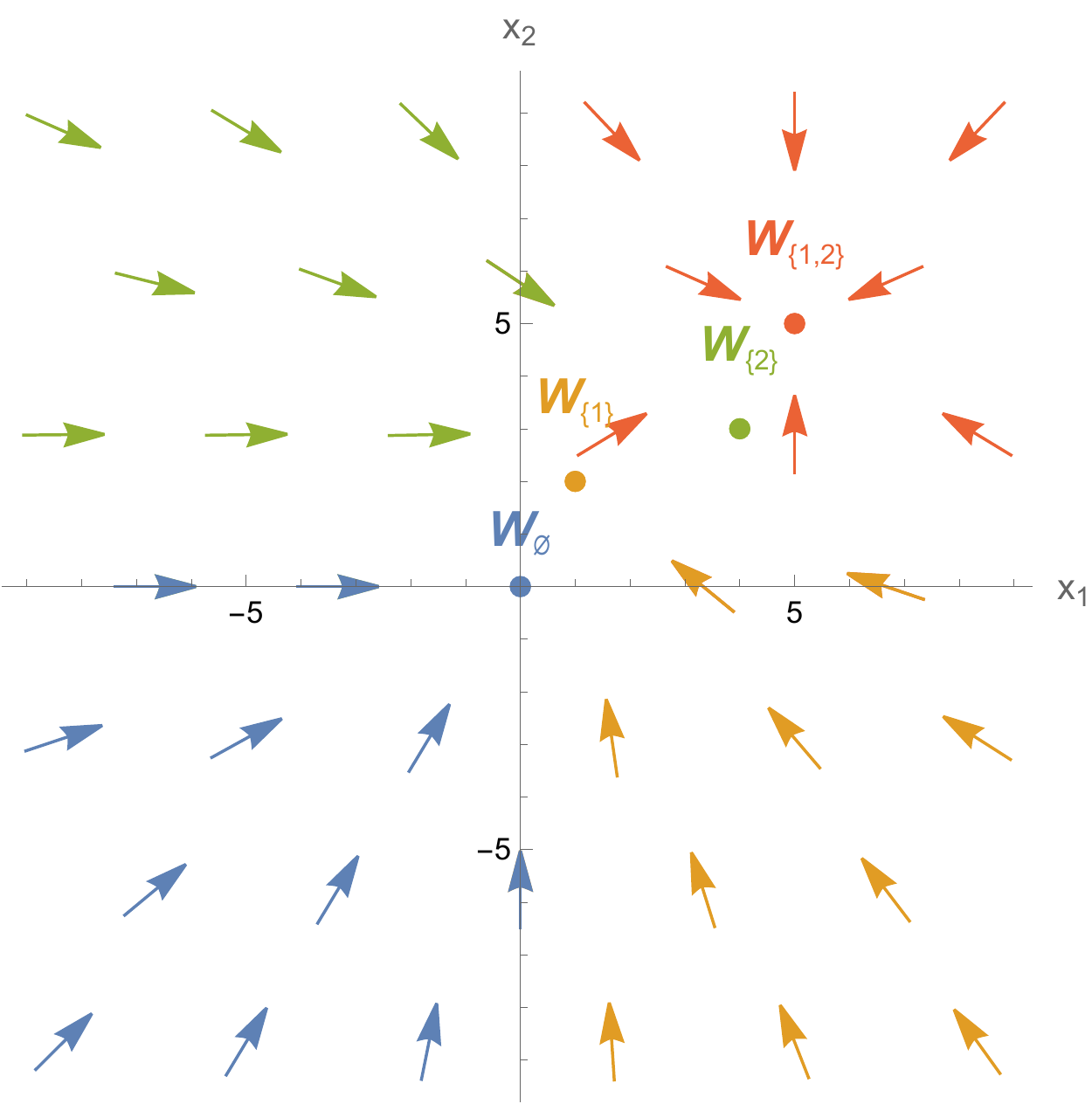}
\caption{Vector field plot of Example \ref{exmp12}. The network has one stable fixed point: $\boldsymbol{W}_{\left\{1,2\right\}}$ . The other three orthants are not stable sets since they don't include their attractor points.}
    \label{fig:VPex1}
\end{figure}
\end{exmp}
\begin{exmp} \label{exmp22}
We continue with Example \ref{exmp2}. The attractor points of the parts are
\begin{equation}
\boldsymbol{W}_{\left\{3\right\}}=\begin{pmatrix}-1\\-1\\1\end{pmatrix},\,\boldsymbol{W}_{\left\{1,3\right\}}=\begin{pmatrix}1\\-1\\1\end{pmatrix},\,\boldsymbol{W}_{\left\{2,3\right\}}=\begin{pmatrix}-1\\1\\1\end{pmatrix},\,\boldsymbol{W}_{\left\{1,2,3\right\}}=\begin{pmatrix}1\\1\\1\end{pmatrix}\,.
\end{equation}
All of the attractor points are stable fixed points since we have that
\begin{align}
S_{\left\{3\right\}}\boldsymbol{W}_{\left\{3\right\}}=
\begin{pmatrix}
-1 & 0 & 0\\
0 & -1 & 0 \\
0 & 0 & 1
\end{pmatrix}
\begin{pmatrix}
-1 \\
-1 \\
1
\end{pmatrix}
&=\begin{pmatrix}
1 \\
1 \\
1 
\end{pmatrix}
>\boldsymbol{0}\,, \nonumber\\
S_{\left\{1,3\right\}}\boldsymbol{W}_{\left\{1,3\right\}}=
\begin{pmatrix}
1 & 0 & 0\\
0 & -1 & 0 \\
0 & 0 & 1
\end{pmatrix}
\begin{pmatrix}
1 \\
-1 \\
1
\end{pmatrix}
&=\begin{pmatrix}
1 \\
1 \\
1 
\end{pmatrix}
>\boldsymbol{0}\,, \nonumber\\
S_{\left\{2,3\right\}}\boldsymbol{W}_{\left\{2,3\right\}}=
\begin{pmatrix}
-1 & 0 & 0\\
0 & 1 & 0 \\
0 & 0 & 1
\end{pmatrix}
\begin{pmatrix}
-1 \\
1 \\
1
\end{pmatrix}
&=\begin{pmatrix}
1 \\
1 \\
1 
\end{pmatrix}
>\boldsymbol{0}\,, \nonumber\\
S_{\left\{1,2,3\right\}}\boldsymbol{W}_{\left\{1,2,3\right\}}=
\begin{pmatrix}
1 & 0 & 0\\
0 & 1 & 0 \\
0 & 0 & 1
\end{pmatrix}
\begin{pmatrix}
1 \\
1 \\
1
\end{pmatrix}
&=\begin{pmatrix}
1 \\
1 \\
1 
\end{pmatrix}
>\boldsymbol{0}\,.
\end{align}
The network has $4$ stable fixed points respectively in the orthants ${\cal P}_{\left\{3\right\}},{\cal P}_{\left\{1,3\right\}},{\cal P}_{\left\{2,3\right\}}$ and ${\cal P}_{\left\{1,2,3\right\}}$ (Figure \ref{fig:VPex2}). 
\begin{figure}[ht]
\centering
  \captionsetup{width=.8\textwidth}
\includegraphics[width=0.6\textwidth]{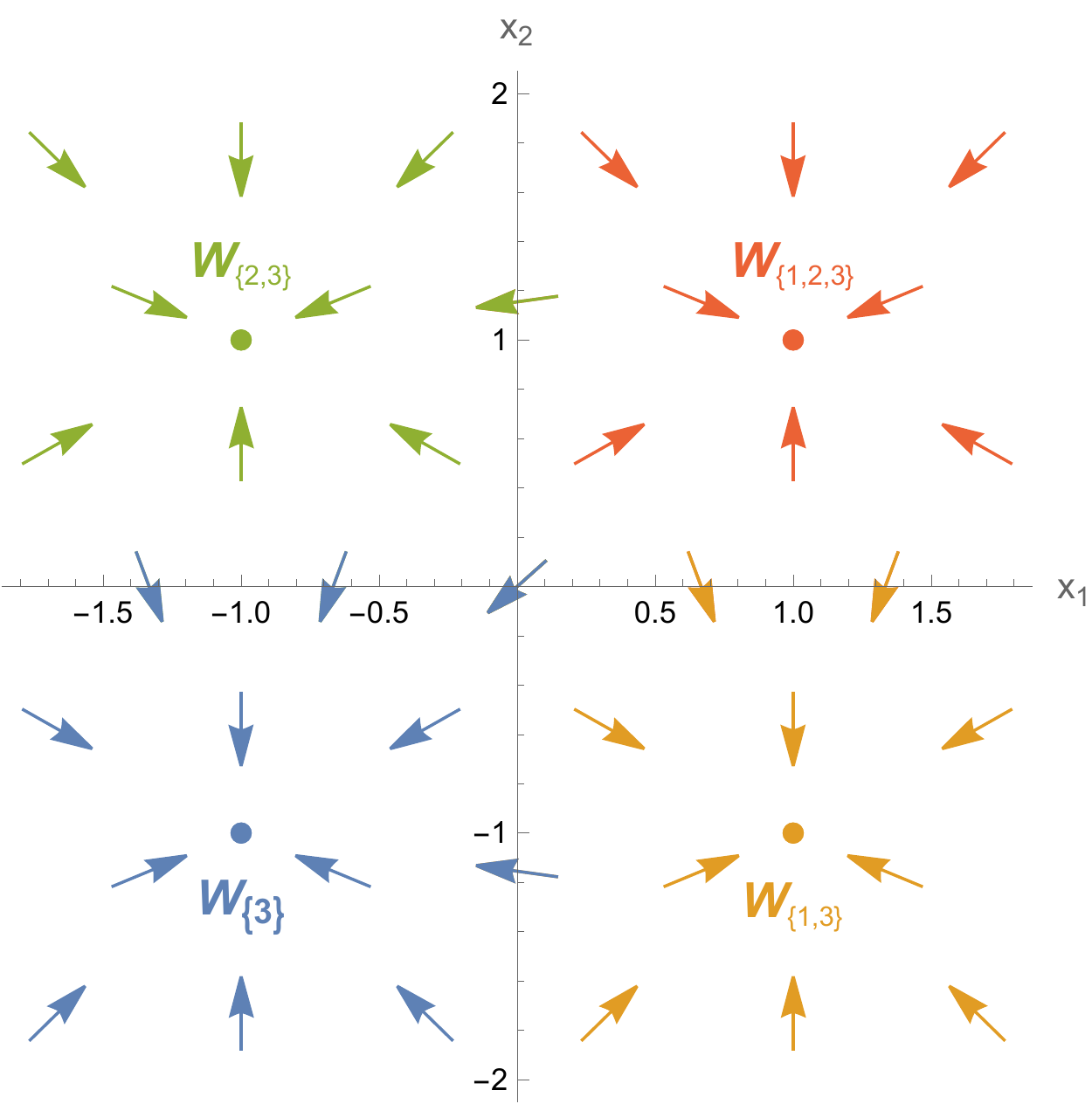}
\caption{Vector field plot of Example \ref{exmp22} projected on the hyperplane $x_3=1$. The network has a stable fixed points in each of the quandrants of the hyperplane.}
    \label{fig:VPex2}
\end{figure}
\end{exmp}

\section{Sign Patterns in Network Connectivity} \label{sec:sp}
\subsection{Signatures} 

The conditions (\ref{fpcond3},~\ref{fpcond4}) for the existence of a stable set put constraints on the possible signs the elements of the weight matrix may have.\footnote{There are multiple representations of \textit{signatures} of a set of elements. We already defined the signature vector of a part in \eqref{signvec}. This vector assigns $+1$ to every (strictly) positive unit and $-1$ to every negative unit. Signatures of an ordered set of elements can be analogously defined by a signature vector, however typically one writes signatures symbolically as an array of $+$'s,$-$'s and $0$'s. For example the signature of $(1,-3,-8,-3,-24,24)$ is $(+,-,-,-,-,+)$, but the signature vector is $(1,-1,-1,-1,-1,1)$. Since both are equivalent ways to represent the signs of elements we will not make a clear distinction and use both representation interchangeably.} This becomes clear when one understands the relationship between linear inequalities and forbidden signatures. To start we consider the simple example of two numbers $a,b \in\mathbb{R}$ that satisfy the inequality
\begin{equation}
a-b>0\,.
\end{equation}
It is clear that $a< 0 $, $b>0$ is not allowed by the order, meaning that the signature $(-,+)$ is forbidden by the linear inequality. Observe that this forbidden, strictly nonzero, signature is the inverse of the signs in the inequality; in that case there would be an overall negative sign and hence a violation of the order. The other signatures $(0,0)$, $(-,0)$ and $(0,+)$ are also disallowed. These are subsignatures of the strictly nonzero signature where in one or more of the positions a nonzero sign is replaced by a zero. The example is easily extended to arbitrary number of elements. The map between linear inequalities containing all the elements of a set and their forbidden strictly nonzero signatures is an isomorphism. 
\subsubsection{Row Signatures}
Now take the $i^{\text{th}}$ row $\boldsymbol{W}^i$ of the weight matrix with elements $\left(w_{i1},w_{i2},\ldots,w_{in}\right)$. Each of these elements can be positive, negative or zero. In total there are $3^n$ possible signatures of which $2^n$ are strictly nonzero. Suppose there is a stable part ${\cal P}_\alpha$. The condition \eqref{fpcond4} imposes the inequalities
\begin{equation}
\sum_{j\,\in\,\alpha}w_{ij}>0 \;\text{ for }\;i\in\alpha  \qquad\text{or}\qquad \sum_{j\,\in\,\alpha}w_{ij}< 0 \;\text{ for }\;i\not\in\alpha\,.
\label{ord}
\end{equation}
The relation between linear inequalities and forbidden signatures immediately implies that in both cases of \eqref{ord} signatures of the row elements get disallowed by the fixed point condition. Those are the ones where for all $j\in\alpha$: $w_{ij} <0$ for $i\in\alpha$ and $w_{ij} >0$  for $i\not\in\alpha$ together with their subsignatures. We can symbolically represent these as
\begin{equation} 
\text{Forbidden signatures:}\quad (\underbrace{-,\ldots,-}_\alpha,\underbrace{*,\ldots,*}_{\alpha^c})+\text{sub.} \quad\text{or}\quad (\underbrace{+,\ldots,+}_{\alpha},\underbrace{*,\ldots,*}_{\alpha^c})+\text{sub.}\,
\label{forbsgn}
\end{equation} 
We use the notation where $*$'s denote arbitrary signs and $\alpha^c=\left[n\right]\setminus\alpha$ is the set complement. A consequence of the forbidden signatures is that there must be at the least a single $w_{ij}>0$ or $w_{ij}< 0$ within the $i$'th row respective of $i\in\alpha$ or $i\in\alpha^c$. 
\subsubsection{Matrix Sign Patterns} Forbidden row elements of $W$ assemble into forbidden \textit{matrix sign patterns}\footnote{Sign patterns are the representations of a signed graph \cite{zaslavsky1982signed}.}. Sign patterns are the sign structures of matrices defined analogously to signature vectors. For practical purposes we'll use the following definition.
\begin{definition}
The sign pattern ${\cal S}$ of a matrix $W$ is a matrix of $-1$, $0$ and $1$'s such that
\begin{equation}
W= W^+\circ{\cal S}
\end{equation}
with ``$\circ$" the Hadamard (elementwise) product and $W^+$ a positive matrix. 
\end{definition}
It follows immediately from the definition that the sign pattern of a matrix $W$ is unique. Suppose there would be two dissimilar sign patterns ${\cal S}_1$, ${\cal S}_2$ for a given $W$. Then there are two positive matrices $W^+_1$, $W^+_2$ such that
\begin{equation}
W^+_1\circ{\cal S}_1=W^+_2\circ{\cal S}_2\quad\Rightarrow\quad {\cal S}_1=\left( W^+_2\oslash W^+_1\right)\circ{\cal S}_2\,,
\end{equation} 
where ``$\oslash$" is the Hadamard (elementwise) division. Since $W^+_2\oslash W^+_1$ is positive, all signs and zeros of ${\cal S}_2$ are preserved and hence ${\cal S}_1={\cal S}_2$ which is a contradiction.

Matrices with the same sign pattern form a \textit{sign pattern class} ${\cal Q}\left({\cal S}\right)$. In the next sections we will identify the set of sign patterns that are \textit{necessary} or \textit{sufficient} for the existence of stable sets. A sign pattern \textit{allows} the stability of a set $\alpha$ if there is a matrix in ${\cal Q}\left({\cal S}\right)$ that satisfies the stability condition \eqref{fpcond4}. A sign pattern \textit{requires} the stability of a set $\alpha$ if all matrices in ${\cal Q}\left({\cal S}\right)$ satisfy the stability condition \eqref{fpcond4}. The set of all allowed sign patterns are the necessary sign patterns. The set of all required sign patterns are the sufficient sign patterns.
\begin{exmp} \label{exmp23}
The matrix in Example \ref{exmp2}
\begin{equation}
W=\begin{pmatrix}
2 & 0 & -1 \\
0 & 2 & -1 \\
0  & 0 & 1
\end{pmatrix}
\end{equation}
has the sign pattern matrix
\begin{equation}
{\cal S}=\begin{pmatrix}
1 & 0 & -1 \\
0 & 1 & -1 \\
0  & 0 & 1
\end{pmatrix}\,.
\end{equation}
Taking the positive matrix
\begin{equation}
W^+=\begin{pmatrix}
2 & \epsilon & 1 \\
\epsilon & 2 & 1 \\
\epsilon  & \epsilon & 1
\end{pmatrix}\,,
\end{equation}
with $\epsilon>0$, one finds that $W=W^+\circ{\cal S}$.
\end{exmp}\subsection{Necessary Sign Patterns}
\subsubsection{Single Stable Set}
\begin{theorem} \label{thm:nsp}
A set $\alpha$ is allowed to be stable iff for each row $i\in [n]$ there is at least one connection $w_{ij}$ where $j\in\alpha$ with the sign $s_\alpha^i$.
\end{theorem}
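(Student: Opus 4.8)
The plan is to translate the stable set condition \eqref{fpcond4} into the $n$ scalar inequalities it encodes and then to exploit the fact that they decouple across rows. Using $P_\alpha\boldsymbol{1}=\boldsymbol{p}_\alpha$ and $W\boldsymbol{p}_\alpha=\boldsymbol{W}_\alpha$, the condition $S_\alpha W P_\alpha\boldsymbol{1}>\boldsymbol{0}$ reads $s_\alpha^i W_\alpha^i>0$ for every $i\in[n]$, i.e.
\[
s_\alpha^i\sum_{j\in\alpha}w_{ij}>0\qquad\forall\,i\in[n].
\]
Since the $i$-th inequality involves only the entries of row $i$ lying in the columns indexed by $\alpha$, and since entries of distinct rows may be chosen independently within a sign pattern class $\mathcal{Q}(\mathcal{S})$, the sign pattern of $W$ allows $\alpha$ to be stable iff each of these $n$ inequalities is \emph{separately} satisfiable by some choice of positive magnitudes compatible with the prescribed signs.

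It therefore suffices to characterize, row by row, when $s_\alpha^i\sum_{j\in\alpha}w_{ij}>0$ is attainable. Writing the left-hand side as $\sum_{j\in\alpha}s_\alpha^i w_{ij}$, each summand has sign $s_\alpha^i\,\mathrm{sign}(w_{ij})$, which is positive precisely when $\mathrm{sign}(w_{ij})=s_\alpha^i$. For the ``if'' direction I would argue by domination: if some $j_0\in\alpha$ has $\mathrm{sign}(w_{ij_0})=s_\alpha^i$, set $|w_{ij_0}|$ large and fix every remaining nonzero $\alpha$-entry of row $i$ to unit magnitude; then $s_\alpha^i\sum_{j\in\alpha}w_{ij}\geq |w_{ij_0}|-(|\alpha|-1)$, which is positive once $|w_{ij_0}|>|\alpha|-1$. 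For the ``only if'' direction, if no $j\in\alpha$ carries the sign $s_\alpha^i$, then every summand $s_\alpha^i w_{ij}$ with $j\in\alpha$ is $\leq 0$ (being either zero or of sign $-s_\alpha^i$), so the whole sum is $\leq 0$ for every matrix in the class and the inequality can never hold.

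Combining the two directions row by row, and then over all rows, yields the stated equivalence: the sign pattern allows $\alpha$ to be stable iff every row $i$ contains at least one entry $w_{ij}$ with $j\in\alpha$ and $\mathrm{sign}(w_{ij})=s_\alpha^i$. This is exactly the ``at least one $w_{ij}>0$ (resp.\ $<0$) in row $i\in\alpha$ (resp.\ $i\in\alpha^c$)'' consequence noted after \eqref{forbsgn}, now promoted to an iff by showing the sign condition is not merely necessary but also sufficient. I expect the only delicate point to be stating the decoupling cleanly --- making explicit that magnitudes in different rows, and the favorable magnitude within a row, are genuinely free within $\mathcal{Q}(\mathcal{S})$ --- since the content of the domination estimate itself is elementary.
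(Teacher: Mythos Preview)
Your argument is correct. The row-decoupling observation, the domination estimate for sufficiency, and the ``all summands $\leq 0$'' obstruction for necessity are exactly the content needed, and you state them cleanly.

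The paper's own proof reaches the same conclusion by a slightly different route: instead of unpacking the scalar inequalities directly, it writes $W=W^+\circ\mathcal{S}$ and uses the identity $(A\circ B)\boldsymbol{x}=\mathrm{diag}(AD_xB^T)$ to recast the stable set condition as $\mathrm{diag}\big(W^+[S_\alpha\mathcal{S}P_\alpha]^T\big)>0$, from which it reads off that each row of $S_\alpha\mathcal{S}P_\alpha$ must contain a positive entry. Logically this is the same row-by-row analysis you perform, but packaged in Hadamard-product notation; your version is more elementary and in fact more explicit about \emph{why} ``at least one positive entry per row'' is both necessary and sufficient (the paper asserts that step without the domination argument). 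What the paper's formulation buys is the intermediate equation \eqref{fpcond5}, which it reuses verbatim in the proof of Theorem~\ref{thm:ssp} on sign stability; if you intend to prove that result as well, you may find it convenient to record the Hadamard reformulation anyway.
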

\begin{proof}
We start by writing the weight matrix as a Hadamard product of a positive matrix $W^+$ with a sign pattern matrix
\begin{equation}
W=W^+\circ {\cal S}\,.
\label{Hprod}
\end{equation}
Making use of the fact that $S_\alpha$ and $P_\alpha$ are diagonal matrices we rewrite, after plugging in \eqref{Hprod}, the stable set condition \eqref{fpcond4} into
\begin{equation}
W^+\circ\left(S_\alpha {\cal S}P_\alpha\right)\boldsymbol{1}> 0\,.
\end{equation}
Furthermore since that
\begin{equation}
\left(A\circ B\right) \boldsymbol{x}=\mathrm{diag}\left(A D_x B^{T}\right)
\end{equation}
with $D_x$ the corresponding diagonal matrix of $\boldsymbol{x}$, the stable set condition can be reformulated as
\begin{equation}
\mathrm{diag}\left(W^+\left[S_\alpha{\cal S}P_\alpha\right]^T\right)> 0\,.
\label{fpcond5}
\end{equation}
The inequality can be satisfied if and only if at least one element in each row of $S_\alpha {\cal S} P_\alpha$ is positive
\begin{equation}
\forall i,\,\exists j : s_\alpha^i {\cal S}^{ij} p_\alpha^j > 0\,.
\end{equation}
This immediately implies that
\begin{equation}
\forall i,\,\exists j\in\alpha : S^{ij}=s_\alpha^i\,.
\label{scond}
\end{equation}
\end{proof}
This result is a variation of what is known in the literature as the sign patterns that allow semipositivity \cite{johnson_smith_tsatsomeros_2020,johnson1993qualitative}. There are two corollaries that follow. 
\begin{corollary} \label{cor:EI}
If $\alpha$ is a $k$-sized stable set then there are at least $k$ excitatory and $n-k$ inhibitory connections.
\end{corollary}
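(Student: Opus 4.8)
The plan is to read the corollary off directly from Theorem~\ref{thm:nsp} by a counting argument over the rows of $W$, split according to whether the row index lies in $\alpha$ or in the complement $\alpha^c=[n]\setminus\alpha$. Since $\alpha$ is actually a stable set, $W$ itself satisfies the stable set condition \eqref{fpcond4}, so $W$ lies in a sign pattern class that allows the stability of $\alpha$; hence the conclusion of Theorem~\ref{thm:nsp} applies verbatim to the entries of $W$.

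First I would record the sizes of the two index blocks: as $\alpha$ is $k$-sized there are exactly $k$ rows indexed by $\alpha$ and exactly $n-k$ rows indexed by $\alpha^c$. For a row $i\in\alpha$ the signature is $s_\alpha^i=+1$, so Theorem~\ref{thm:nsp} furnishes a column $j\in\alpha$ with $w_{ij}$ carrying the sign $s_\alpha^i=+1$, i.e.\ an excitatory connection $w_{ij}>0$. For a row $i\in\alpha^c$ the signature is $s_\alpha^i=-1$, and the same theorem furnishes a column $j\in\alpha$ with $w_{ij}<0$, i.e.\ an inhibitory connection. Each invocation yields one connection lying in the row under consideration, so the $\alpha$-rows contribute $k$ excitatory connections and the $\alpha^c$-rows contribute $n-k$ inhibitory connections.

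The one point I would make explicit is that these connections are genuinely distinct, so the counts do not collapse. Because a single application of Theorem~\ref{thm:nsp} per row selects an entry sitting in that row, connections selected from different rows occupy different matrix positions; hence the $k$ excitatory entries are pairwise distinct, the $n-k$ inhibitory entries are pairwise distinct, and the excitatory block (drawn from rows in $\alpha$) cannot overlap the inhibitory block (drawn from rows in $\alpha^c$) since $\alpha$ and $\alpha^c$ are disjoint. This delivers at least $k$ excitatory and at least $n-k$ inhibitory connections, which is the claim.

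I do not expect a genuine obstacle here: the content is entirely packaged inside Theorem~\ref{thm:nsp}, and the only thing to watch is the bookkeeping that guarantees distinctness of the selected entries, so that the lower bounds $k$ and $n-k$ are not overstated. A useful sanity check is that the two bounds sum to $n$, the total number of rows, consistent with extracting exactly one guaranteed sign-constrained entry from each row.
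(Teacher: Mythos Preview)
Your proof is correct and follows essentially the same approach as the paper: invoke Theorem~\ref{thm:nsp} to obtain one sign-constrained entry per row, then count across the $k$ rows in $\alpha$ and the $n-k$ rows in $\alpha^c$. The paper expresses this more tersely via the sums $E\geq\sum_i p_\alpha^i=k$ and $I\geq\sum_i p_{\alpha^c}^i=n-k$, while you spell out the distinctness of the selected entries explicitly, but the argument is the same.
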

\begin{proof}
If $\alpha$ is stable then by Theorem \ref{thm:nsp} the sign pattern matrix must obey \eqref{scond}. Taking the sum of the positive sign elements over all the rows, we are led to the bound
\begin{equation}
E\geq\sum_i p_\alpha^i=|\alpha|=k
\end{equation}
on the number of excitatory connections $E$. Similarly for the sum of negative sign elements we find the bound
\begin{equation}
I\geq\sum_i p_{\alpha^c}^i=\sum_i\left(1-p_\alpha^i\right)=n-k
\end{equation}
on the number of inhibitory connections $I$.
\end{proof}
The lower bounds on the excitatory and inhibitory connections imposed by a stable set of size $k$ are displayed in Figure \ref{fig:EIbs}. Larger sets require more excitatory connections to be in an active state. The inverse is true for small sets; in order to silence the external units, more inhibitory connections required.
\begin{figure}[ht]
\centering
  \captionsetup{width=.8\textwidth}
\includegraphics[width=0.7\textwidth]{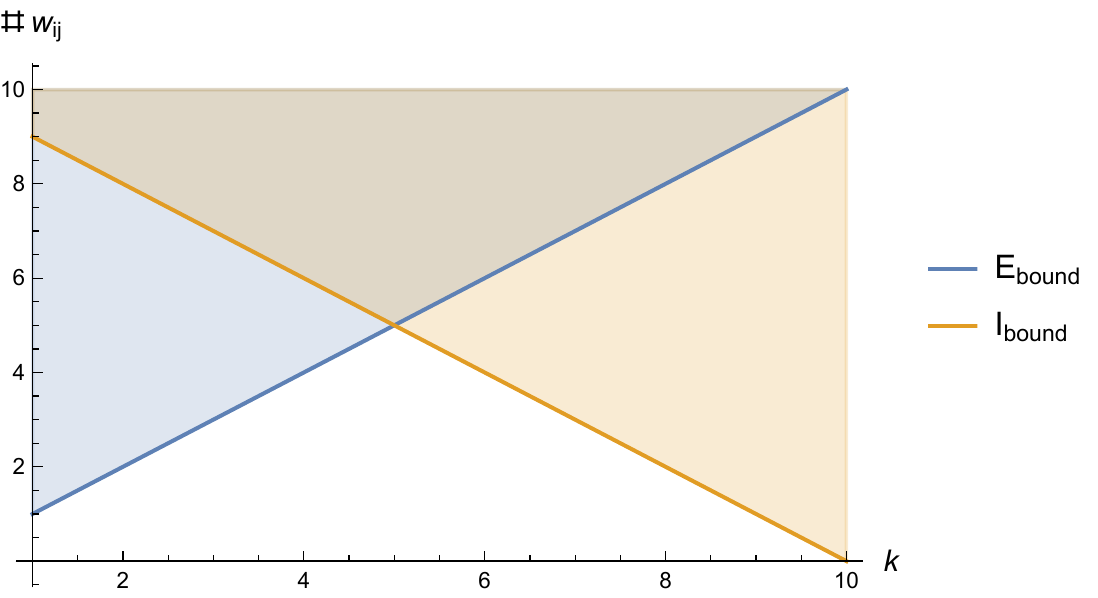}
\caption{Stable sets of size $k$ provide lower bounds on the amount of excitatory ($\text{E}_{\text{bound}}$) and inhibitory ($\text{I}_{\text{bound}}$) connections.}
    \label{fig:EIbs}
\end{figure}
\begin{corollary} \label{cor:fs}
If $\alpha$ is a $k$-sized stable set then the number of allowed signatures for each row of $W$ are $3^n\big(1-\left(\ft{2}{3}\right)^{k}\big)$ .
\end{corollary}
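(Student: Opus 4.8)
The plan is to prove this by complementary counting, using Theorem~\ref{thm:nsp} to pin down exactly which row signatures survive. Fix a row index $i\in[n]$. By Theorem~\ref{thm:nsp}, the signatures of $\boldsymbol{W}^i$ compatible with $\alpha$ being stable are precisely those in which at least one entry $w_{ij}$ with $j\in\alpha$ carries the sign $s_\alpha^i$. The key structural observation is that this constraint touches only the $k=|\alpha|$ columns indexed by $\alpha$; the $n-k$ entries in columns $j\in\alpha^c$ are entirely unconstrained and so contribute a free multiplicative factor of $3^{n-k}$.

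Rather than enumerate the allowed signatures directly, I would count the forbidden ones on the $\alpha$-block. A signature is forbidden exactly when \emph{none} of the $k$ entries $w_{ij}$, $j\in\alpha$, equals the target sign $s_\alpha^i$. For each such position there are exactly two admissible ``wrong'' symbols left, namely the opposite sign and zero, so the forbidden configurations on the $\alpha$-block number $2^k$. Tensoring with the free $\alpha^c$-block gives $2^k\,3^{n-k}$ forbidden signatures out of the total $3^n$, whence the allowed count is
\[
3^n-2^k\,3^{n-k}=3^n\Bigl(1-\bigl(\tfrac{2}{3}\bigr)^{k}\Bigr),
\]
after rewriting $2^k\,3^{n-k}=3^n\,(2/3)^k$. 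This is the claimed expression.

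Finally I would note that the result is stated uniformly over all rows, and this is justified because the count does not depend on $i$: whether $i\in\alpha$ (target sign $+$) or $i\in\alpha^c$ (target sign $-$), the combinatorics is identical, since in both cases one merely requires that at least one of $k$ prescribed positions match a single fixed nonzero symbol. Hence the formula holds verbatim for every row of $W$.

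There is no genuinely hard step; the argument is pure complementary counting. The only points needing mild care are the observation that failing to match $s_\alpha^i$ leaves exactly two of the three symbols available, which produces the base $2$ (rather than $3$) in the forbidden count, and the recognition that the $\alpha^c$-columns decouple from the stability constraint and cancel cleanly against the total in the final ratio.
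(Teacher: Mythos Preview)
Your proof is correct and follows essentially the same complementary-counting approach as the paper: both arguments identify the forbidden signatures as those with no entry of sign $s_\alpha^i$ in the $\alpha$-columns, count them as $2^k\cdot 3^{n-k}$, and subtract from $3^n$. The paper phrases the $2^k$ factor as ``one strictly nonzero forbidden signature together with its subsignatures,'' whereas you phrase it as ``two wrong symbols per position,'' but these are the same count viewed from two angles.
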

\begin{proof}
Like in \eqref{forbsgn} we can represent the forbidden signatures by their strictly nonzero signatures plus entrywise zero element variations. By \eqref{fpcond5} and the isomorphism between inequality and strictly positive forbidden signatures, the forbidden sign patterns can be written as
\begin{equation}
{\cal S}_{\text{forbidden}}=-\boldsymbol{s_\alpha} \boldsymbol{p}_{\boldsymbol{\alpha}}^T+{\cal S}_{\text{forbidden}}P_{\alpha^c}+\text{sub.}\,.
\label{dissp}
\end{equation}
For each row there are $3^{n-k}$ forbidden signatures coming from the unconstrained elements in the columns of ${\cal S}_{\text{forbidden}}P_{\alpha^c}$. There are $2^k$ subsignatures. We can therefore conclude that a single admitting part of rank $k$ will disallow $2^k\cdot3^{n-k}$ signatures in each row.
\end{proof}
In Figure \ref{fig:alsps} we show the effect of the existence of a stable set of size $k$ on the percentage of allowed row signatures and sign patterns. 
\begin{figure}[ht]
\centering
  \captionsetup{width=.8\textwidth}
\includegraphics[width=1\textwidth]{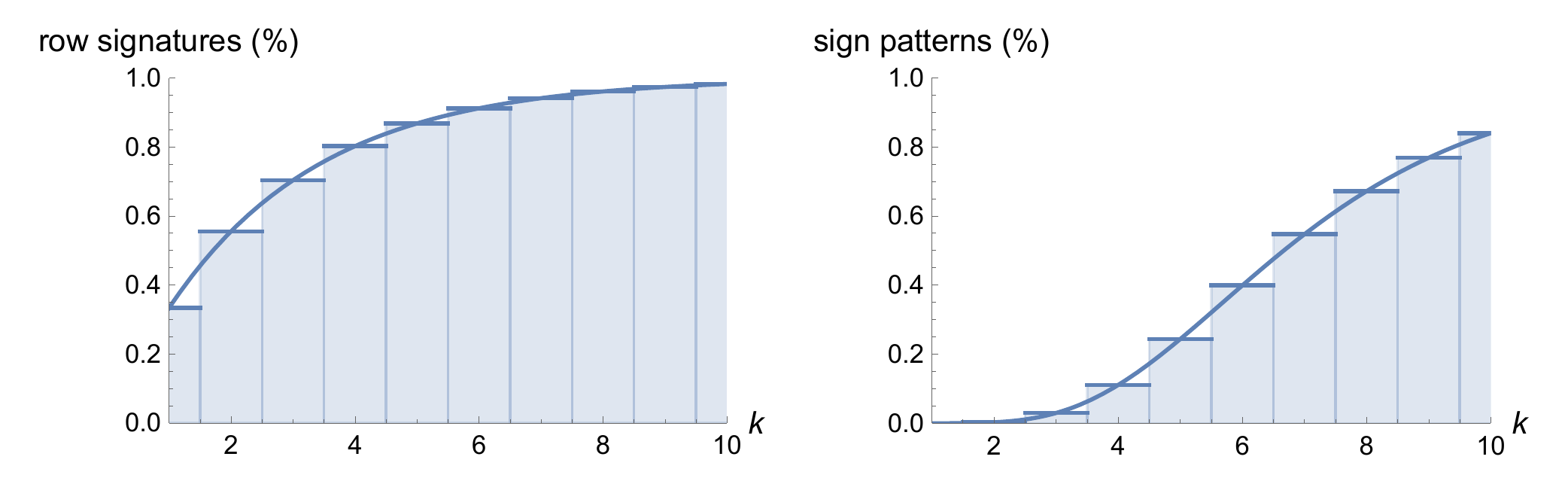}
\caption{Stable sets of size $k$ constrain the allowed row signatures and sign patterns of the network.}
    \label{fig:alsps}
\end{figure}
\begin{remark}
Symbolically one can represent the structure of a sign pattern allowed by the stable set condition \eqref{fpcond4} by writing them in a row/column-permuted form
\vspace{0.75em}

\begin{equation}
\begin{matrix}
Q{\cal S}R=
 \begin{pmatrix}
 \bovermat{$\alpha$}{\quad\boldsymbol{+}&} \hspace{0.7em} \bovermat{$\alpha^c$}{&\boldsymbol{*}\quad} \\[0.7em]

\quad\boldsymbol{-}& \hspace{0.7em} &\boldsymbol{*}\quad \\[0.7em]
  \end{pmatrix}
  \begin{aligned}
  &\left.\begin{matrix}
  \partialphantom    \\
  \end{matrix} \right\} %
  \alpha\\
  &\left.\begin{matrix}
  \partialphantom   \\
  \end{matrix}\right\}%
  \alpha^c\\
 \end{aligned}
 \end{matrix}\,
\end{equation}
where $Q$, $R$ are permutation matrices. The bold $\boldsymbol{+}$/$\boldsymbol{-}$ sign represents the submatrices for which the total sum of their columns are positive/negative. Sign patterns for which these submatrices have rows with only negative/positive elements are hence forbidden. Per row there are $3^{n-k}$ of such combinations. If we include the subsignatures we have $2^{k}\cdot 3^{n-k}$ different sign patterns. The rows of the submatrices need to have at least have one positive/negative element. There are therefore bounds on the number of excitatory and inhibitory connections. Notice that for simplicity we didn't control for the constraint \eqref{nvi}, \eqref{nvi2} in the calculation in Corollary \ref{cor:fs}. For an analysis that includes the additional restrictions see Appendix \ref{app:bsp}.
\end{remark}
\begin{exmp}
Take a 2-dim network. 
\begin{itemize}
\item By Theorem \ref{thm:nsp} sign patterns that respectively allow the sets $\left\{1\right\}$, $\left\{2\right\}$ and $\left\{1,2\right\}$ to be stable are
\begin{align}
\text{allows stability of }\left\{1\right\}&:\quad
\begin{pmatrix}
1 & * \\
-1 & *
\end{pmatrix}\nonumber\\
\text{allows stability of }\left\{2\right\}&:\quad 
\begin{pmatrix}
* & -1 \\
* & 1
\end{pmatrix}\nonumber\\
\text{allows stability of }\left\{1,2\right\}&:\quad
\begin{pmatrix}
1 & * \\
1 & *
\end{pmatrix},\,
\begin{pmatrix}
1 & * \\
* & 1
\end{pmatrix},\,
\begin{pmatrix}
* & 1 \\
1 & *
\end{pmatrix},\,
\begin{pmatrix}
* & 1 \\
* & 1
\end{pmatrix}\, .\label{ss2}
\end{align} 
As a counterexample notice that the weight matrix
\begin{equation}
W=\begin{pmatrix}
-a & 0 \\
c & d
\end{pmatrix}
\end{equation}
with $a>0$ does not satisfy the stability condition for $\left\{1,2\right\}$.

\item There is at least one excitatory and one inhibitory connection in the weight matrix that stabilizes $\left\{1\right\}$ or $\left\{2\right\}$. The network with stable set $\left\{1,2\right\}$ has two excitatory connections. These observations are in line with Corollary \ref{cor:EI}. 

\item At last the distinct necessary sign patterns for $\left\{1,2\right\}$ can be counted from \eqref{ss2}. Each row allows five signatures: $(1,1)$, $(1,0)$, $(1,-1)$, $(0,1)$ and $(-1,1)$. Using the formula in Corollary \ref{cor:fs} we find the same result
\begin{equation}
\# \text{allowed signatures per row}:\quad 3^2\left(1-\left(\frac{2}{3}\right)^2\right)= 5\,.
\end{equation}
\end{itemize}
\end{exmp}
\subsubsection{Family of Disjoint Stable Sets}
We will now head on to the more general case of families of stable sets. We first look at the necessary sign patterns for allowing a family of mutually disjoint stable sets $\left(\alpha_\mathtt{i}\right)_{\mathtt{i}\in \mathtt{I}}$, i.e. $\alpha_{\mathtt{i}}\cap\alpha_\mathtt{i-1}=\emptyset$ for $\mathtt{i}\neq\mathtt{j}$.
\begin{theorem} \label{thm:stsg}
A family of mutually disjoint sets $\left(\alpha_\mathtt{i}\right)_{\mathtt{i}\in \mathtt{I}}$ with sizes $|\alpha_\mathtt{i}|=k_\mathtt{i}$ is allowed to be stable iff for each row $i\in [n]$ and $\mathtt{i}\in\mathtt{I}$ there is at least one connection $w_{ij}$ where $j\in\alpha_{\mathtt{i}}$ with the sign $s_{\alpha_{\mathtt{i}}}^i$.
\end{theorem}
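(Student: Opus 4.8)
The plan is to reduce the family statement to the single-set result of Theorem~\ref{thm:nsp} by exploiting the mutual disjointness of the sets, which decouples the stability constraints into independent blocks of columns. Recall that ``allowed to be stable'' for a family means there is a \emph{single} matrix $W\in\mathcal{Q}(\mathcal{S})$ satisfying $S_{\alpha_\mathtt{i}}WP_{\alpha_\mathtt{i}}\boldsymbol{1}>\boldsymbol{0}$ simultaneously for every $\mathtt{i}\in\mathtt{I}$. I would handle the two directions of the equivalence separately.

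For the necessity (``only if'') direction I would argue that if the family is allowed, then the realizing matrix $W\in\mathcal{Q}(\mathcal{S})$ in particular satisfies the stable set condition for each individual $\alpha_\mathtt{i}$. Applying Theorem~\ref{thm:nsp} to each member of the family one at a time then yields the required sign pattern condition: for every row $i$ and every $\mathtt{i}$ there is a $j\in\alpha_\mathtt{i}$ with $\mathcal{S}^{ij}=s_{\alpha_\mathtt{i}}^i$. This direction is immediate and uses nothing about disjointness.

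For the sufficiency (``if'') direction I would construct an explicit realizing matrix. Observe that the condition $S_{\alpha_\mathtt{i}}WP_{\alpha_\mathtt{i}}\boldsymbol{1}>\boldsymbol{0}$ reads, row by row, as $s_{\alpha_\mathtt{i}}^i\sum_{j\in\alpha_\mathtt{i}}w_{ij}>0$, so the constraint attached to $\alpha_\mathtt{i}$ involves only the columns indexed by $\alpha_\mathtt{i}$. Since the family is mutually disjoint, each column index $j$ lies in at most one $\alpha_\mathtt{i}$, so the constraints for distinct members act on disjoint blocks of columns and cannot interfere. Writing $W=W^+\circ\mathcal{S}$, within each block I would set the magnitude $w^+_{i,j^*}$ of a guaranteed sign-matching entry $j^*\in\alpha_\mathtt{i}$ (which exists by hypothesis and satisfies $s_{\alpha_\mathtt{i}}^i\mathcal{S}^{ij^*}=(s_{\alpha_\mathtt{i}}^i)^2=1$) larger than the sum of the remaining magnitudes in that block, forcing the signed sum positive; this is precisely the single-set construction underlying Theorem~\ref{thm:nsp}. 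Entries in columns belonging to no $\alpha_\mathtt{i}$ are assigned arbitrary positive magnitudes consistent with $\mathcal{S}$. Because the blocks are disjoint, these independent choices assemble into one matrix $W\in\mathcal{Q}(\mathcal{S})$ realizing the whole family.

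The main obstacle is packaged entirely into the disjointness observation: it is what guarantees that the per-set magnitude assignments, each solvable on its own by Theorem~\ref{thm:nsp}, do not conflict and can be realized simultaneously in a single matrix. Without disjointness a column $j$ shared by $\alpha_\mathtt{i}$ and $\alpha_\mathtt{j}$ would be forced to carry two possibly incompatible magnitude demands, and the clean reduction would break down --- which is exactly why the mutual disjointness hypothesis is needed here and why the general (non-disjoint) case is deferred to a separate treatment.
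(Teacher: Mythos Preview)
Your proposal is correct and follows essentially the same approach as the paper: the paper's proof is a two-sentence sketch noting that the disjoint sets constrain different column blocks so the conditions are independent, and then invokes Theorem~\ref{thm:nsp} for each set. Your version simply unpacks this more carefully, treating the two implications separately and spelling out the explicit magnitude construction for sufficiency, but the underlying idea---disjointness decouples the column blocks so the single-set result applies independently---is identical.
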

\begin{proof}
Because the sets are mutually disjoint their stability conditions in \eqref{fpcond4} constrain the signs of different columns and hence are independent. The theorem therefore follows by applying Theorem \ref{thm:nsp} for each of the sets.
\end{proof}
\begin{corollary} \label{cor:EId}
If a mutually disjoint family of sets $\left(\alpha_\mathtt{i}\right)_{\mathtt{i}\in \mathtt{I}}$ with sizes $|\alpha_\mathtt{i}|=k_\mathtt{i}$ is stable then there are at least $\sum_{\mathtt{i}\in\mathtt{I}}k_\mathtt{i}$ excitatory and  $n|\mathtt{I}|-\sum_{\mathtt{i}\in\mathtt{I}}k_\mathtt{i}$ inhibitory connections.
\end{corollary}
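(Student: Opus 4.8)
The plan is to mirror the single-set argument of Corollary~\ref{cor:EI}, replacing Theorem~\ref{thm:nsp} by Theorem~\ref{thm:stsg} and adding the bookkeeping needed to show that the connections forced by distinct members of the family are genuinely distinct. The whole weight of the corollary rests on this distinctness, which is exactly where mutual disjointness enters.

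First I would invoke Theorem~\ref{thm:stsg}: stability of the family forces, for every row $i\in[n]$ and every $\mathtt{i}\in\mathtt{I}$, an entry $w_{ij}$ with $j\in\alpha_\mathtt{i}$ whose sign equals $s_{\alpha_\mathtt{i}}^i$. Splitting on that sign, each pair $(i,\mathtt{i})$ with $i\in\alpha_\mathtt{i}$ (so $s_{\alpha_\mathtt{i}}^i=+1$) forces a positive, i.e.\ excitatory, connection, whereas each pair with $i\not\in\alpha_\mathtt{i}$ (so $s_{\alpha_\mathtt{i}}^i=-1$) forces a negative, i.e.\ inhibitory, one.

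The key step is to argue that all these forced connections are pairwise distinct. For a fixed $\mathtt{i}$, the connections forced by different rows $i$ lie in different rows and so are automatically distinct; for distinct $\mathtt{i},\mathtt{j}\in\mathtt{I}$ the witnessing column index ranges over $\alpha_\mathtt{i}$ and $\alpha_\mathtt{j}$ respectively, and these column blocks are disjoint by hypothesis, so no single entry of $W$ can be charged twice. Summing the distinct forced entries over the family then yields
\begin{equation}
E\;\geq\;\sum_{\mathtt{i}\in\mathtt{I}}\bigl|\{\,i:i\in\alpha_\mathtt{i}\,\}\bigr|\;=\;\sum_{\mathtt{i}\in\mathtt{I}}|\alpha_\mathtt{i}|\;=\;\sum_{\mathtt{i}\in\mathtt{I}}k_\mathtt{i}
\end{equation}
for the excitatory connections and
\begin{equation}
I\;\geq\;\sum_{\mathtt{i}\in\mathtt{I}}\bigl|\{\,i:i\not\in\alpha_\mathtt{i}\,\}\bigr|\;=\;\sum_{\mathtt{i}\in\mathtt{I}}\left(n-k_\mathtt{i}\right)\;=\;n|\mathtt{I}|-\sum_{\mathtt{i}\in\mathtt{I}}k_\mathtt{i}
\end{equation}
for the inhibitory ones, which are exactly the claimed bounds.

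The main obstacle is precisely this distinctness accounting: without disjointness a single entry $w_{ij}$ could serve as the required witness for two different sets simultaneously, and the additive sums above would collapse toward the single-set bound of Corollary~\ref{cor:EI}. It is the disjointness hypothesis — confining the witnessing columns for different $\mathtt{i}$ to disjoint index blocks — that keeps the two counts additive, and this is the only place the hypothesis is actually used.
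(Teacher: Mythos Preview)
Your proposal is correct and follows essentially the same approach as the paper: invoke Theorem~\ref{thm:stsg}, count for each pair $(i,\mathtt{i})$ whether the forced witness is positive or negative via $p_{\alpha_\mathtt{i}}^i$, and sum over rows and over $\mathtt{I}$. The only difference is that you make the distinctness-of-witnesses argument explicit (using disjointness of the column blocks), whereas the paper leaves this implicit in its per-row count; your version is slightly more careful but not a different route.
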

\begin{proof}
From Theorem \ref{thm:stsg} we know that a stable set requires at least one weight with sign $s_\alpha^i$ in row $i$. The number of positive, negative weights in the row are thus respectively
\begin{equation}
\sum_{\mathtt{i}\in\mathtt{I}} p_{\alpha_{\mathtt{i}}}^i\,,\qquad \sum_{\mathtt{i}\in\mathtt{I}} p_{\alpha^c_{\mathtt{I}}}^i=|\mathtt{I}|-\sum_{\mathtt{i}\in\mathtt{I}} p_{\alpha_{\mathtt{i}}}^i\,.
\end{equation} 
Taking the sum over the rows we can conclude that the disjoint family of sets has at least
\begin{align}
\sum_{i\in \left[n\right]}\sum_{\mathtt{i}\in\mathtt{I}} p_{\alpha_{\mathtt{I}}}^i&=\sum_{\mathtt{i}\in\mathtt{I}}k_{\mathtt{i}}
\end{align}
excitatory connections and at least
\begin{equation}
\sum_{i\in \left[n\right]}\left(|\mathtt{I}|-\sum_{\mathtt{i}} p_{\alpha_{\mathtt{I}}}^i\right)=n|\mathtt{I}|-\sum_{\mathtt{i}\in\mathtt{I}}k_{\mathtt{i}}
\end{equation}
inhibitory connections.
\end{proof}
The lower bounds on the excitatory and inhibitory connections imposed by a disjoint stable sets are displayed in Figure \ref{fig:EIbd}. For fixed amount of stable sets $|\mathtt{I}|$ the bounds linearly depend on the average size of the stable sets $\langle k\rangle $. The absolute number of stable sets determines the slope and height of the bounds. This shows that networks with large reservoirs of stable sets, i.e. large storage capacity, require more fine-tuned wirings. The ratio of the bounds on excitatory and inhibitory connections in terms of the stable state density $\langle k\rangle/n$ of the network are presented in Figure \ref{fig:EIr}. High capacity networks require a lot of internal inhibition to sustain their separate stable states.
\begin{figure}[ht]
\centering
  \captionsetup{width=.8\textwidth}
\includegraphics[width=1\textwidth]{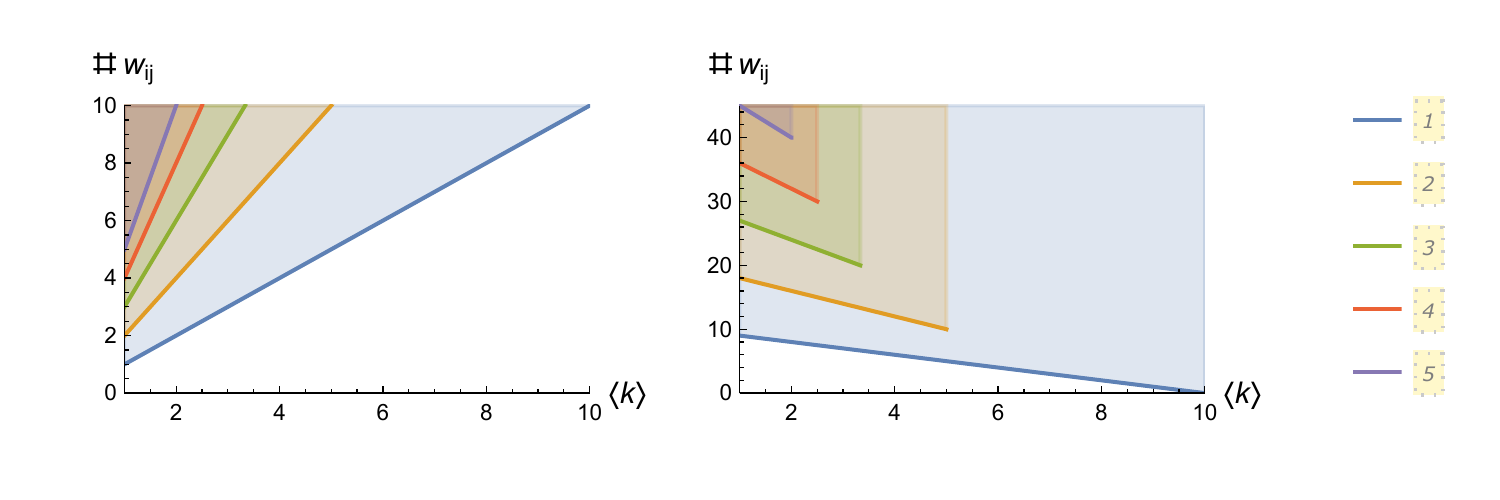}
\caption{Families of disjoint stable sets of average size $\langle k\rangle =\frac1{|\mathtt{I}|}\sum_{\mathtt{i}\in\mathtt{I}}k_{\mathtt{i}}$ provide lower bounds on the amount of excitatory (left figure) and inhibitory (right figure) connections. The bounds are shown for $|\mathtt{I}|=1,\ldots,5$.}
    \label{fig:EIbd}
\end{figure}
\begin{figure}[ht]
\centering
  \captionsetup{width=.8\textwidth}
\includegraphics[width=0.6\textwidth]{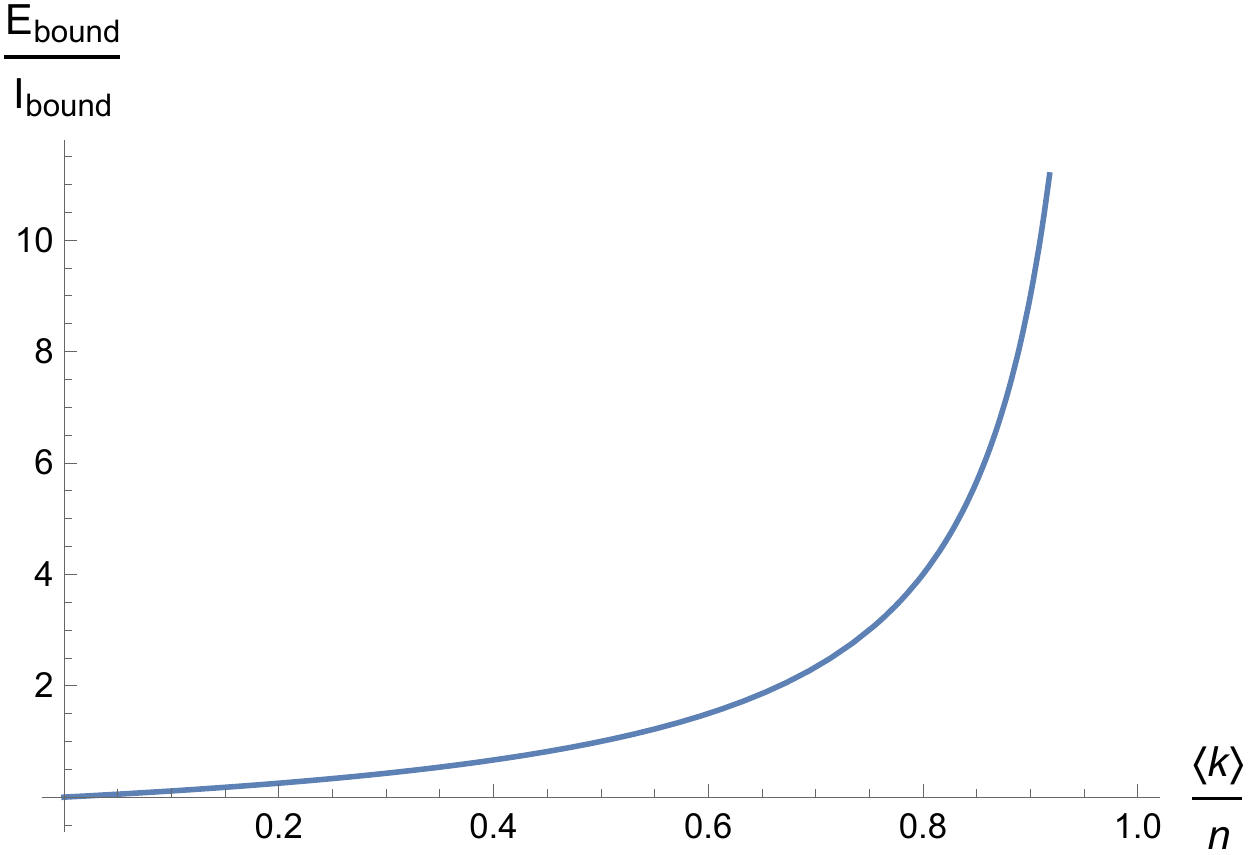}
\caption{Networks with small $\left(\langle k\rangle\ll n\right))$/large $\left(\langle k\rangle\sim n\right)$ disjoint stable sets have larger/smaller inhibitory than excitatory bounds on their connections. Hence generically these networks are likely to contain more inhibitory/excitatory connections.}
    \label{fig:EIr}
\end{figure}
\begin{corollary} \label{cor:fsd}
If a mutually disjoint family of sets $\left(\alpha_\mathtt{i}\right)_{\mathtt{i}\in \mathtt{I}}$ with sizes $|\alpha_\mathtt{i}|=k_\mathtt{i}$ is stable then the number of allowed signatures for each row of $W$ are $3^n\prod_{\mathtt{i}\in \mathtt{I}}\big(1-\left(\ft{2}{3}\right)^{k_\mathtt{i}}\big)$.
\end{corollary}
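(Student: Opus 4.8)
The plan is to reduce the count to a product over the disjoint blocks $\alpha_\mathtt{i}$, mirroring the single-set argument of Corollary~\ref{cor:fs} but now exploiting disjointness to make the per-set constraints independent. First I would invoke Theorem~\ref{thm:stsg}: a fixed row $i$ is compatible with the stability of the whole family iff, for every $\mathtt{i}\in\mathtt{I}$, at least one column $j\in\alpha_{\mathtt{i}}$ carries the sign $s_{\alpha_{\mathtt{i}}}^i$. Since the sets $\alpha_\mathtt{i}$ are mutually disjoint, they single out disjoint blocks of columns, so the admissibility condition factorizes: the allowed sign choices inside block $\alpha_\mathtt{i}$ are entirely decoupled from those inside $\alpha_\mathtt{j}$ for $\mathtt{j}\neq\mathtt{i}$, and from the columns lying outside $\bigcup_\mathtt{i}\alpha_\mathtt{i}$.

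Second, I would count the admissible sign assignments within a single block $\alpha_\mathtt{i}$ of size $k_\mathtt{i}$. There are $3^{k_\mathtt{i}}$ sign vectors on these columns; the forbidden ones are precisely those in which no entry equals the required sign $s_{\alpha_{\mathtt{i}}}^i$, so each of the $k_\mathtt{i}$ positions has two remaining choices (the opposite nonzero sign or $0$), giving $2^{k_\mathtt{i}}$ forbidden vectors and matching the subsignature count of Corollary~\ref{cor:fs}. Hence each block contributes $3^{k_\mathtt{i}}-2^{k_\mathtt{i}}$ admissible choices, while the remaining $n-\sum_{\mathtt{i}}k_\mathtt{i}$ columns are unconstrained and contribute a free factor $3^{\,n-\sum_{\mathtt{i}}k_\mathtt{i}}$.

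Finally, by the multiplication principle the total number of admissible row signatures is
\begin{equation}
3^{\,n-\sum_{\mathtt{i}}k_\mathtt{i}}\prod_{\mathtt{i}\in\mathtt{I}}\bigl(3^{k_\mathtt{i}}-2^{k_\mathtt{i}}\bigr)
=3^n\prod_{\mathtt{i}\in\mathtt{I}}\bigl(1-(\ft{2}{3})^{k_\mathtt{i}}\bigr),
\end{equation}
where I factored $3^{k_\mathtt{i}}$ out of each term and recombined the powers of $3$. The only thing that genuinely needs care is the independence step: it is the mutual disjointness $\alpha_\mathtt{i}\cap\alpha_\mathtt{j}=\emptyset$ that lets me replace a potential inclusion--exclusion over overlapping constraints with a clean product, and that is the sole place the hypothesis is used. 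Everything else is the elementary block-wise count already established for a single stable set, so I expect no further obstacle beyond bookkeeping of the exponents.
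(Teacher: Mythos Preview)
Your proposal is correct and follows essentially the same route as the paper: you use the disjointness of the $\alpha_\mathtt{i}$ to factor the row-signature count into independent blocks, obtain $3^{k_\mathtt{i}}-2^{k_\mathtt{i}}$ admissible choices per block (exactly as in Corollary~\ref{cor:fs}), multiply by the free factor $3^{\,n-\sum_\mathtt{i}k_\mathtt{i}}$, and simplify. The paper's argument is organized slightly differently---it phrases the inductive step as each new set reducing the residual factor $3^{n-k_\mathtt{i}}$---but the content and final computation are identical.
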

\begin{proof}
By Corollary \ref{cor:fs} each constraint reduces the allowed set of signatures to
\begin{equation}
3^n-2^{k_\mathtt{i}}\cdot 3^{n-k_\mathtt{i}}=3^{n-k_\mathtt{i}}\left(3^{k_\mathtt{i}}-2^{k_\mathtt{i}}\right)\,.
\label{asign}
\end{equation} 
The constraint disallows $2^{k_\mathtt{i}}$ combinations of the $3^{k_\mathtt{i}}$ possible signatures in the $\alpha_\mathtt{i}$-columns. Other sets will further reduce the factor $3^{n-k_\mathtt{i}}$ in \eqref{asign} in an equivalent way. They will not change the other factor $\left(3^{k_\mathtt{i}}-2^{k_\mathtt{i}}\right)$ since those are the sign combinations of their complement set. The total allowed number of signatures for each row then becomes
\begin{equation}
3^{n-\sum_{\mathtt{i}\in\mathtt{I}}k_\mathtt{i}}\prod_{\mathtt{i}\in \mathtt{I}}\left(3^{k_\mathtt{i}}-2^{k_\mathtt{i}}\right)= 3^n\prod_{\mathtt{i}\in \mathtt{I}}\big(1-\left(\ft{2}{3}\right)^{k_\mathtt{i}}\big)\,.
\end{equation}
\end{proof}
The exponential suppression of the number of allowed row signatures by families of disjoint stable states is displayed in Figure \ref{fig:alspd}. This portrays the power multistability can have in limiting the configuration space of the connections of a neural network.
\begin{figure}[htp]
\centering
\subfloat[The sizes of the individual sets are taken to be equal $k_{\mathtt{i}}=k_{\mathtt{j}}$ for all $\mathtt{i},\mathtt{j}\in\mathtt{I}$.]{%
  \includegraphics[clip,width=0.45\columnwidth]{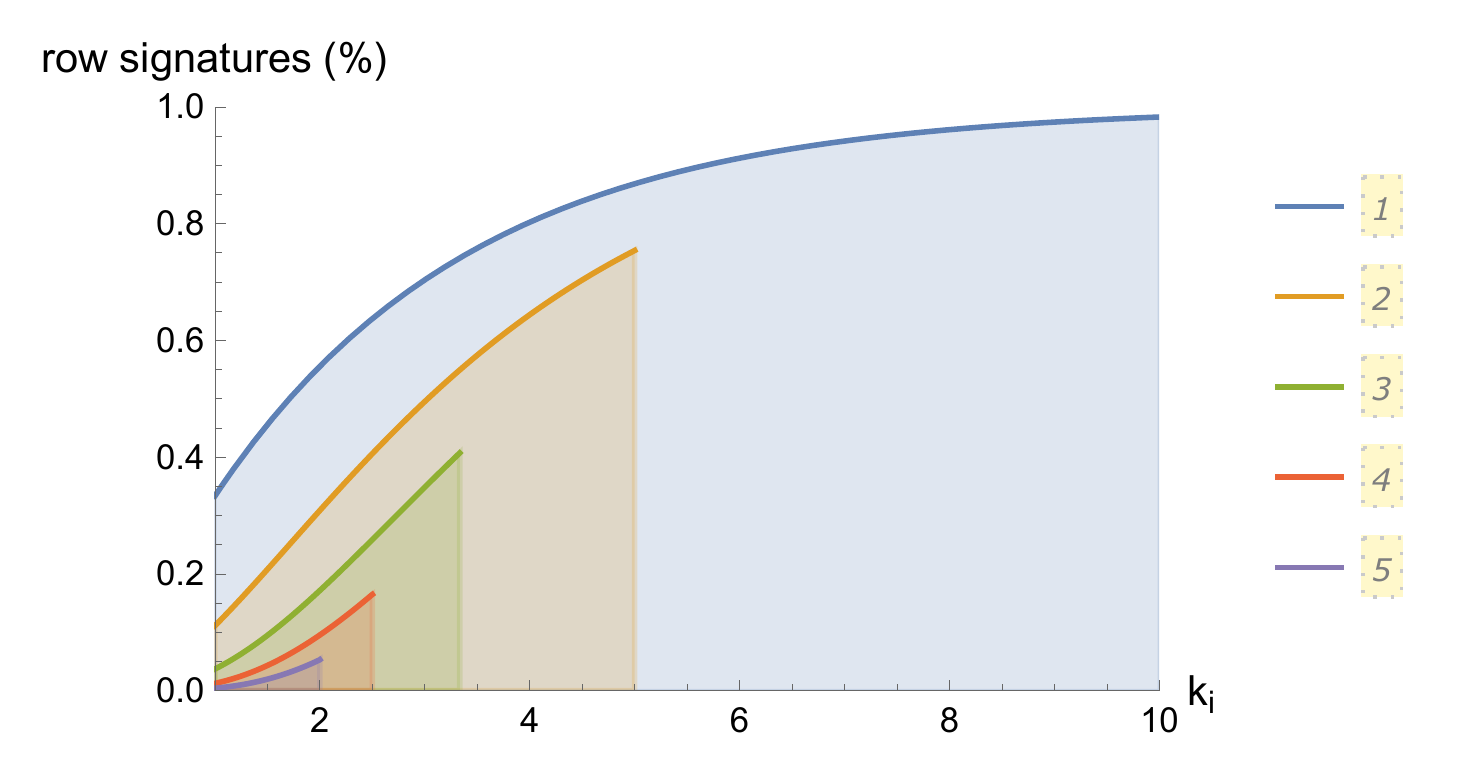}%
}\hspace{0.2cm}
\subfloat[The sizes of the individual sets are fixed to be $k_{\mathtt{i}}=n/|\mathtt{I}|$.]{%
  \includegraphics[clip,width=0.45\columnwidth]{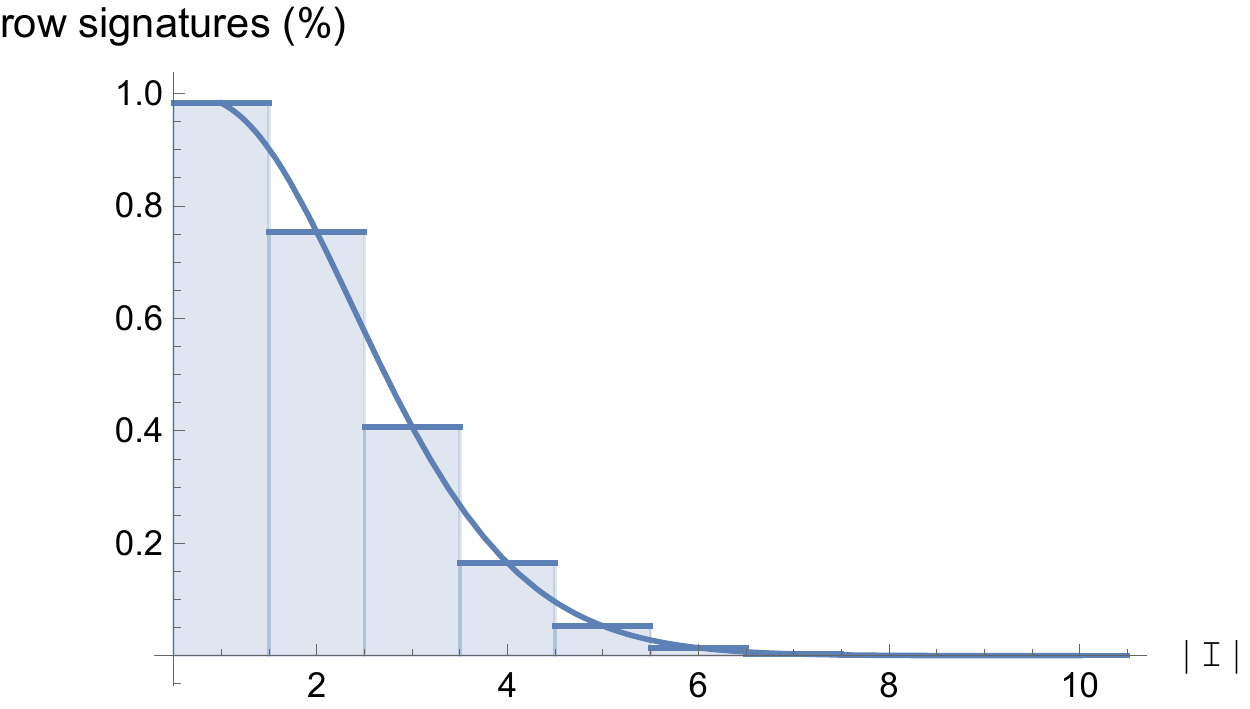}%
}
\caption{The amount of allowed row signatures are suppressed by the amount of disjoint stable sets $|\mathtt{I}|$. }
\label{fig:alspd}
\end{figure}
\begin{exmp}
Suppose a $2$-dim network has $\left\{1\right\}$ and $\left\{2\right\}$ as stable sets.
\begin{itemize}
\item By Theorem \ref{thm:stsg} there is only one allowed sign pattern
\begin{equation} \label{sp12}
\text{allows stability of }\left\{1\right\}\, \& \,\left\{2\right\}:
\begin{pmatrix}
1 & -1 \\
-1 & 1
\end{pmatrix}\,.
\end{equation}
\item There are two excitatory $(E)$ and two inhibitory $(I)$ connections in \eqref{sp12}. This agrees with (and saturates the bounds in) Corollary \ref{cor:EId}
\begin{align}
E&\geq\sum_{\mathtt{i}\in\mathtt{I}}k_\mathtt{i}=1+1=2\nonumber\\
I&\geq n|\mathtt{I}|-\sum_{\mathtt{i}\in\mathtt{I}}k_\mathtt{i}=2\cdot 2-\left(1+1\right)=2\,.
\end{align}
\item By Corollary \ref{cor:fsd} there is exactly
\begin{equation}
3^n\prod_{\mathtt{i}\in \mathtt{I}}\big(1-\left(\ft{2}{3}\right)^{k_\mathtt{i}}\big)=3^2\left(1-\frac23\right)^2=1
\end{equation}
allowed signature for each row and hence one allowed sign pattern. This is what we found in \eqref{sp12}.
\end{itemize}
\end{exmp}
\subsubsection{Family of Nested Stable Sets}
We now give the necessary sign patterns for allowing a family of nested stable sets $\left(\alpha_\mathtt{i}\right)_{\mathtt{i}\in \mathtt{I}}$, i.e. $\alpha_\mathtt{i}\subset\alpha_{\mathtt{j}}$ for $\mathtt{i}<\mathtt{j}$.
\begin{theorem} \label{thm:ns}
A family of nested sets $\left(\alpha_\mathtt{i}\right)_{\mathtt{i}\in \mathtt{I}}$ with sizes $|\alpha_\mathtt{i}|=k_\mathtt{i}$ is allowed to be stable iff for each row $i\in [n]$ there is at least one connection $w_{ij}$ where $j\in\alpha_{\mathtt{1}}$ with sign $s_{\alpha_\mathtt{1}}^i$ and if $i\in\alpha_{\mathtt{i}}\setminus\alpha_{\mathtt{i-1}}$ for some $\mathtt{i}\in\mathtt{I}$ at least one positive connection $w_{ij}$ where $j\in\alpha_{\mathtt{i}}$.
\end{theorem}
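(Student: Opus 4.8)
The plan is to mirror the proof of Theorem~\ref{thm:stsg}: the family is simultaneously stabilizable exactly when every $\alpha_\mathtt{i}$ satisfies the stable set condition~\eqref{fpcond4} at once, so Theorem~\ref{thm:nsp} demands that for each row $i\in[n]$ and each level $\mathtt{i}\in\mathtt{I}$ the sign pattern carry an entry ${\cal S}^{ij}=s^i_{\alpha_\mathtt{i}}$ with $j\in\alpha_\mathtt{i}$. The device that compresses this long list of requirements is that along a nested chain the signatures of a fixed row are monotone: since $\alpha_1\subset\alpha_2\subset\cdots$, the unit $i$ stays silent up to some entry level $\mathtt{k}$ and is active from then on, so $s^i_{\alpha_\mathtt{i}}=-1$ for $\mathtt{i}<\mathtt{k}$ and $s^i_{\alpha_\mathtt{i}}=+1$ for $\mathtt{i}\geq\mathtt{k}$, with $s^i_{\alpha_\mathtt{i}}\equiv-1$ when $i$ never enters the family.

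First I would collapse the constraints carrying $s^i_{\alpha_\mathtt{i}}=-1$. Each asks for a negative entry in $\alpha_\mathtt{i}$, and because $\alpha_1$ sits inside all of them a single negative entry in $\alpha_1$ meets every such constraint at once, while conversely the level $\mathtt{i}=1$ constraint is already of this shape; the whole block is thus equivalent to the one at $\alpha_1$. Adjoining the case $i\in\alpha_1$, where the $\alpha_1$ constraint instead reads as a positive entry, recovers the first clause: an entry $w_{ij}$ with $j\in\alpha_1$ of sign $s^i_{\alpha_1}$. The constraints carrying $s^i_{\alpha_\mathtt{i}}=+1$ collapse dually, since $\alpha_\mathtt{k}$ is the smallest set containing $i$ and a positive entry there propagates to every larger $\alpha_\mathtt{i}$; this yields the second clause, tied to the entry level $\mathtt{k}$.

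The substantive half is sufficiency, where the nested case departs sharply from Theorem~\ref{thm:stsg}: the sets share columns, so the per-set conditions are no longer independent and cannot be met one set at a time. Because only row $i$'s entries enter row $i$'s inequalities, I would build a witnessing matrix one row at a time, choosing magnitudes so that the attractor components $W^i_{\alpha_\mathtt{i}}=\sum_{j\in\alpha_\mathtt{i}}w_{ij}$ take the signs $s^i_{\alpha_\mathtt{i}}$ required by~\eqref{fpcond3}. The recipe is to make the guaranteed sign-$s^i_{\alpha_1}$ entry in $\alpha_1$ numerically dominant, so $W^i_{\alpha_\mathtt{i}}$ inherits its sign at every level below $\mathtt{k}$, then to inflate a positive entry in the freshly activated block $\alpha_\mathtt{k}\setminus\alpha_{\mathtt{k}-1}$ enough to flip $W^i_{\alpha_\mathtt{k}}$ positive, keeping all later blocks small so the positive sign persists for $\mathtt{i}\geq\mathtt{k}$.

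The main obstacle is controlling this flip. Once $W^i_{\alpha_{\mathtt{k}-1}}$ has been driven negative, the increment $W^i_{\alpha_\mathtt{k}}-W^i_{\alpha_{\mathtt{k}-1}}=\sum_{j\in\alpha_\mathtt{k}\setminus\alpha_{\mathtt{k}-1}}w_{ij}$ must outweigh it and be positive, which forces a positive connection to lie inside the newly added shell $\alpha_\mathtt{k}\setminus\alpha_{\mathtt{k}-1}$ rather than merely somewhere in $\alpha_\mathtt{k}$: a positive entry buried in a lower block is already committed to holding an earlier attractor component negative and cannot be reused for the flip. I would therefore read the positive connection of the second clause as living in the block $\alpha_\mathtt{i}\setminus\alpha_{\mathtt{i}-1}$ of units that switch on at level $\mathtt{i}$, since this is what both necessity (through $W^i_{\alpha_\mathtt{k}}-W^i_{\alpha_{\mathtt{k}-1}}>0$) and the flip construction demand; I expect the crux of the proof to be checking that, with the positive entries so localized, all attractor components of a single row can be sign-aligned at once.
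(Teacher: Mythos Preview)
Your necessity reduction---collapsing the per-level conditions from Theorem~\ref{thm:nsp} via the monotonicity of $s^i_{\alpha_\mathtt{i}}$ along the chain---is exactly the paper's argument, stated more explicitly. The paper's proof in fact stops there: it shows that the conjunction of the individual sign requirements from Theorem~\ref{thm:nsp} reduces to the two stated clauses, and does not separately construct a single $W$ in the sign-pattern class realizing all the $\alpha_\mathtt{i}$ simultaneously.

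You go beyond this, and the obstacle you flag is real. If row $i$ enters at level $\mathtt{k}>\mathtt{1}$ and the only positive entry among its $\alpha_\mathtt{k}$-columns lies strictly below the shell, in some $\alpha_\mathtt{m}$ with $\mathtt{m}<\mathtt{k}$, then for any choice of magnitudes
\[
W^i_{\alpha_\mathtt{k}}-W^i_{\alpha_{\mathtt{k}-1}}=\sum_{j\in\alpha_\mathtt{k}\setminus\alpha_{\mathtt{k}-1}}w_{ij}\leq 0,
\]
so $W^i_{\alpha_{\mathtt{k}-1}}<0$ and $W^i_{\alpha_\mathtt{k}}>0$ cannot both hold (e.g.\ $n=3$, $\alpha_\mathtt{m}=[\mathtt{m}]$, row $3$ with signature $(-,+,-)$ satisfies the clauses as literally written yet admits no joint realization). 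Your necessity computation $W^i_{\alpha_\mathtt{k}}-W^i_{\alpha_{\mathtt{k}-1}}>0$ pins the positive entry to the shell $\alpha_\mathtt{k}\setminus\alpha_{\mathtt{k}-1}$, and your dominant-entry construction then delivers sufficiency for that sharpened reading. So your proposal both matches the paper on the part it actually proves and supplies the sufficiency half the paper leaves implicit---together with the localization the second clause needs for the ``iff'' to hold for joint stabilizability.
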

\begin{proof}
From Theorem \ref{thm:nsp} we know that for every $\mathtt{i}\in\mathtt{I}$, at all rows $i\in n$ there must be at least one element with sign $s^i_{\alpha_{\mathtt{i}}}$ in a column $j\in\alpha_{\mathtt{i}}$. Because of the nested nature of the sets, the columns that contain an element with a specific sign are included in the larger sets. Hence at the rows where signatures overlap 
\begin{equation}
s_{\alpha_\mathtt{j}}^i=s_{\alpha_{\mathtt{i}}}^i\Leftrightarrow i\in\left(\alpha_{\mathtt{j}}\setminus\alpha_{\mathtt{i}}\right)^c \quad\text{for }\mathtt{j}>{\mathtt{i}}
\end{equation}
the condition \eqref{scond} for $\alpha_{\mathtt{j}}$ is automatically satisfied by the subset $\alpha_{\mathtt{i}}$. This implies that starting from the constraint by the first set $\alpha_{\mathtt{1}}$ all further sign restrictions are on the rows $i\in\alpha_{\mathtt{i}}\setminus\alpha_{\mathtt{i-1}}$ where $\alpha_{\mathtt{i}}$ requires a positive element in a column $j\in\alpha_{\mathtt{i}}$ that is not required by $\alpha_{\mathtt{i-1}}$ and its subsets. 
\end{proof}
\begin{corollary} \label{cor:EIn}
If a nested family $\left(\alpha_\mathtt{i}\right)_{\mathtt{i}\in \mathtt{I}}$ with sizes $|\alpha_\mathtt{i}|=k_\mathtt{i}$ is stable then the number of excitatory connections are at least $k_{\mathtt{N}}$ while there are at least $n-k_\mathtt{1}$ inhibitory connections.
\end{corollary}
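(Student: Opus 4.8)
The plan is to extract both bounds directly from Theorem \ref{thm:ns}, which already records exactly which signs each row is forced to contain when the nested family is stable. The excitatory bound will come from a shell decomposition of the largest set $\alpha_{\mathtt{N}}$, and the inhibitory bound will come from the constraint attached to the smallest set $\alpha_{\mathtt{1}}$ alone. In both cases the counted entries automatically sit in pairwise distinct rows, which is the one point that prevents overcounting.

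For the excitatory count, set $\alpha_{\mathtt{0}}=\emptyset$ and write the largest set as the disjoint union $\alpha_{\mathtt{N}}=\alpha_{\mathtt{1}}\sqcup\bigsqcup_{\mathtt{i}\geq 2}(\alpha_{\mathtt{i}}\setminus\alpha_{\mathtt{i-1}})$. The claim is that every row indexed by $\alpha_{\mathtt{N}}$ is forced to carry at least one positive weight. Indeed, for $i\in\alpha_{\mathtt{1}}$ the first clause of Theorem \ref{thm:ns} supplies a connection of sign $s^i_{\alpha_{\mathtt{1}}}=+1$ in a column of $\alpha_{\mathtt{1}}$, while for $i\in\alpha_{\mathtt{i}}\setminus\alpha_{\mathtt{i-1}}$ with $\mathtt{i}\geq 2$ the second clause supplies a positive connection in a column of $\alpha_{\mathtt{i}}$. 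Since these $k_{\mathtt{N}}$ rows are distinct, the positive entries they contain occupy distinct matrix positions, and therefore the number of excitatory connections obeys $E\geq k_{\mathtt{N}}$.

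For the inhibitory count I would use only the first clause of Theorem \ref{thm:ns}, which requires in every row $i$ an entry of sign $s^i_{\alpha_{\mathtt{1}}}$ lying in a column of $\alpha_{\mathtt{1}}$. For the $n-k_{\mathtt{1}}$ rows with $i\notin\alpha_{\mathtt{1}}$ this sign is $s^i_{\alpha_{\mathtt{1}}}=-1$, so each such row is forced to contain at least one negative weight. These again lie in distinct rows, hence the number of inhibitory connections obeys $I\geq n-k_{\mathtt{1}}$.

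The argument is pure bookkeeping, so I do not expect a genuine obstacle; the only step needing care is verifying that the forced positive (respectively negative) entries all sit in distinct rows, which is precisely what licenses summing one per row without double counting, together with the mild subtlety that the smallest shell $\alpha_{\mathtt{1}}\setminus\alpha_{\mathtt{0}}$ must draw its positivity from the first clause rather than the second. As a consistency check, specialising to a single-set family $\alpha_{\mathtt{1}}=\alpha_{\mathtt{N}}=\alpha$ of size $k$ returns the bounds $E\geq k$ and $I\geq n-k$ of Corollary \ref{cor:EI}, as expected.
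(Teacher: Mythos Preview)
Your proof is correct and follows essentially the same route as the paper: both extract from Theorem \ref{thm:ns} that every row $i\in\alpha_{\mathtt{N}}$ is forced to contain a positive weight and every row $i\in\alpha_{\mathtt{1}}^c$ a negative one, then sum over rows. Your shell decomposition of $\alpha_{\mathtt{N}}$ just makes explicit the case split (first clause for $i\in\alpha_{\mathtt{1}}$, second clause for $i\in\alpha_{\mathtt{i}}\setminus\alpha_{\mathtt{i-1}}$) that the paper compresses into a single invocation of Theorem \ref{thm:ns}.
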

\begin{proof}
By Theorem \ref{thm:ns} on row $i\in n$ there is an excitatory connection if $i\in\alpha_\mathtt{N}$, where $\mathtt{N}=\max(\mathtt{I})$. There will be an inhibitory connection if $i\in\alpha_{\mathtt{1}}^c$. We thus find at least
\begin{equation}
p^i_{\alpha_{\mathtt{N}}}\,, \quad 1- p^i_{\alpha_\mathtt{1}}
\end{equation}
excitatory and inhibitory connections on row $i$. The total bounds are
\begin{equation}
E\geq\sum_i p^i_{\alpha_{\mathtt{N}}}=k_{\mathtt{N}}\,,\quad I\geq\sum_{i}\left(1- p^i_{\alpha_\mathtt{1}}\right)=n-k_{\mathtt{1}}\,.
\end{equation}
\end{proof}
For nested stable sets in an $n$-dim network the lower bounds on the excitatory and inhibitory connections are completely determined by the bounds of the minimal and maximal stable set: $\alpha_{\mathtt{1}}$ and $\alpha_{\mathtt{N}}$. The minimal/maximal set puts lower bounds on the inhibitory/excitatory connections. In Figure \ref{fig:EIvp} a vector plot is drawn in terms of the sizes $k_{\mathtt{1}}$ and $k_{\mathtt{N}}$ of these sets. 
\begin{figure}[ht]
\centering
  \captionsetup{width=.8\textwidth}
\includegraphics[width=0.6\textwidth]{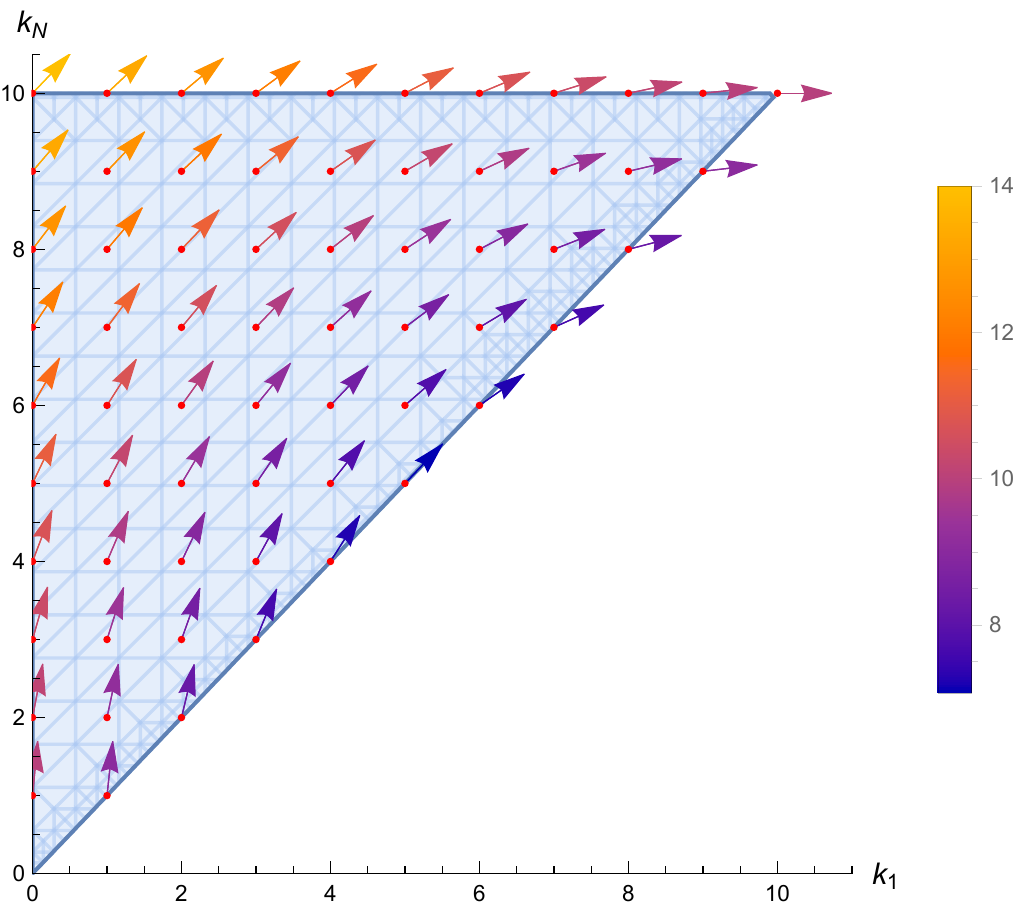}
\caption{A vector plot of $\left(E_{\text{bound}},I_{\text{bound}}\right)$ on the parameter space $\left(k_{\mathtt{1}},k_\mathtt{N}\right)$ for nested stable sets. For larger $k_{\mathtt{1}}$ and $k_{\mathtt{N}}$ bounds become increasingly excitatory. The bounds do not depend on the amount of nested sets $|\mathtt{I}|$. The colors represent the absolute value of the vectors.}
    \label{fig:EIvp}
\end{figure}
\begin{corollary} \label{cor:fsn}
If a nested family $\left(\alpha_\mathtt{i}\right)_{\mathtt{i}\in \mathtt{I}}$ with sizes $|\alpha_\mathtt{i}|=k_\mathtt{i}$ is stable then there are $3^{n}\Big(1-\left(\ft{2}{3}\right)^{k_\mathtt{1}}-\sum_{\mathtt{i}>\mathtt{1}}\left(1-2^{-k_\mathtt{1}}\right)\left(\ft{2}{3}\right)^{k_\mathtt{i}}p_{\alpha_\mathtt{i}\setminus\alpha_{\mathtt{{i-1}}}}^i\Big)$ allowed signatures for the $i^{\text{th}}$ row.
\end{corollary}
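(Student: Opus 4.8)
The plan is to reduce the statement to a purely combinatorial count of the allowed signatures of a single fixed row $i$, governed by the sign conditions that Theorem~\ref{thm:ns} places on that row. First I would record exactly which columns are constrained: the row must carry an element of sign $s_{\alpha_{\mathtt{1}}}^i$ somewhere in the columns indexed by $\alpha_{\mathtt{1}}$, and, if $i\in\alpha_{\mathtt{i}}\setminus\alpha_{\mathtt{i-1}}$ for some level $\mathtt{i}>\mathtt{1}$, an additional positive element somewhere in the columns indexed by $\alpha_{\mathtt{i}}$. The key structural observation is that, because the family is nested, a fixed row $i$ enters the family at \emph{at most one} level: either $i\in\alpha_{\mathtt{1}}$, or $i\notin\alpha_{\mathtt{N}}$ (so $i$ never enters, writing $\mathtt{N}=\max(\mathtt{I})$), or there is a unique $\mathtt{m}>\mathtt{1}$ with $i\in\alpha_{\mathtt{m}}\setminus\alpha_{\mathtt{m-1}}$. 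This is precisely what makes the single indicator $p^i_{\alpha_{\mathtt{i}}\setminus\alpha_{\mathtt{i-1}}}$ in the claimed formula select at most one term of the sum over $\mathtt{i}$. Moreover, a positive element in $\alpha_{\mathtt{m}}$ automatically satisfies the requirements of all higher levels, since $\alpha_{\mathtt{m}}\subset\alpha_{\mathtt{i}}$ for $\mathtt{i}>\mathtt{m}$, so no constraint beyond level $\mathtt{m}$ is genuinely new.

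In the first two cases the extra positive constraint is either absent or already implied: if $i\in\alpha_{\mathtt{1}}$ then $s_{\alpha_{\mathtt{1}}}^i=+1$ and the base $\alpha_{\mathtt{1}}$ condition already demands a positive entry in $\alpha_{\mathtt{1}}$, while if $i\notin\alpha_{\mathtt{N}}$ only the (negative) $\alpha_{\mathtt{1}}$ condition survives. In both situations the count collapses to the single-set tally of Corollary~\ref{cor:fs}, yielding $3^n\big(1-\big(\ft{2}{3}\big)^{k_{\mathtt{1}}}\big)$ allowed signatures, which matches the claimed expression exactly because every indicator $p^i_{\alpha_{\mathtt{i}}\setminus\alpha_{\mathtt{i-1}}}$ then vanishes.

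The substance of the proof is the remaining case $i\in\alpha_{\mathtt{m}}\setminus\alpha_{\mathtt{m-1}}$, where two genuine constraints coexist: at least one negative entry among the $k_{\mathtt{1}}$ columns of $\alpha_{\mathtt{1}}$, and at least one positive entry among the $k_{\mathtt{m}}$ columns of $\alpha_{\mathtt{m}}\supset\alpha_{\mathtt{1}}$. I would count the allowed signatures by inclusion--exclusion over the two ``bad'' events. The number of signatures with no negative entry in $\alpha_{\mathtt{1}}$ is $2^{k_{\mathtt{1}}}3^{n-k_{\mathtt{1}}}$, and the number with no positive entry in $\alpha_{\mathtt{m}}$ is $2^{k_{\mathtt{m}}}3^{n-k_{\mathtt{m}}}$. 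The delicate term is the intersection: a column in $\alpha_{\mathtt{1}}$ is then forbidden to be both negative and positive, hence must be zero; a column in $\alpha_{\mathtt{m}}\setminus\alpha_{\mathtt{1}}$ may only be zero or negative; and the remaining $n-k_{\mathtt{m}}$ columns are free. This gives $1^{k_{\mathtt{1}}}\,2^{k_{\mathtt{m}}-k_{\mathtt{1}}}\,3^{n-k_{\mathtt{m}}}$, and the nesting $\alpha_{\mathtt{1}}\subset\alpha_{\mathtt{m}}$ is exactly what forces the clean factor $1$ on the $\alpha_{\mathtt{1}}$ columns. Getting this intersection right is the main obstacle, since it is the only place where the two constraints interact.

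Finally I would assemble the pieces. Subtracting the two bad counts and adding back the intersection gives
\begin{equation}
3^n-2^{k_{\mathtt{1}}}3^{n-k_{\mathtt{1}}}-2^{k_{\mathtt{m}}}3^{n-k_{\mathtt{m}}}+2^{k_{\mathtt{m}}-k_{\mathtt{1}}}3^{n-k_{\mathtt{m}}}=3^n\Big(1-\big(\ft{2}{3}\big)^{k_{\mathtt{1}}}-\big(1-2^{-k_{\mathtt{1}}}\big)\big(\ft{2}{3}\big)^{k_{\mathtt{m}}}\Big),
\end{equation}
where I have factored $2^{k_{\mathtt{m}}-k_{\mathtt{1}}}3^{-k_{\mathtt{m}}}=2^{-k_{\mathtt{1}}}\big(\ft{2}{3}\big)^{k_{\mathtt{m}}}$ out of the last two terms. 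Since $i$ enters at the unique level $\mathtt{m}$, replacing the isolated term $\big(1-2^{-k_{\mathtt{1}}}\big)\big(\ft{2}{3}\big)^{k_{\mathtt{m}}}$ by the indicator sum $\sum_{\mathtt{i}>\mathtt{1}}\big(1-2^{-k_{\mathtt{1}}}\big)\big(\ft{2}{3}\big)^{k_{\mathtt{i}}}p^i_{\alpha_{\mathtt{i}}\setminus\alpha_{\mathtt{i-1}}}$ is harmless, and this single expression simultaneously reproduces all three cases, which is the claimed row count.
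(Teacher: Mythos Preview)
Your proposal is correct and follows essentially the same approach as the paper: you reduce to the constraints supplied by Theorem~\ref{thm:ns}, observe that for a fixed row $i$ at most one ``entering'' level $\mathtt{m}$ contributes, and then perform inclusion--exclusion over the two bad events (no $s_{\alpha_{\mathtt{1}}}^i$-entry in $\alpha_{\mathtt{1}}$, no positive entry in $\alpha_{\mathtt{m}}$), using the nesting $\alpha_{\mathtt{1}}\subset\alpha_{\mathtt{m}}$ to compute the intersection as $2^{k_{\mathtt{m}}-k_{\mathtt{1}}}3^{n-k_{\mathtt{m}}}$. The paper's proof carries out the identical count, just phrasing the intersection as ``signatures with zeros in all $\alpha_{\mathtt{1}}$ columns and no $+$ in $\alpha_{\mathtt{i}}\setminus\alpha_{\mathtt{1}}$'' rather than via your direct product $1^{k_{\mathtt{1}}}\cdot 2^{k_{\mathtt{m}}-k_{\mathtt{1}}}\cdot 3^{n-k_{\mathtt{m}}}$; your explicit case split on where $i$ enters the family is slightly cleaner in making the uniqueness of $\mathtt{m}$ transparent.
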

\begin{proof}
By Corollary \ref{cor:fs} and Theorem \ref{thm:ns} the set $\alpha_{\mathtt{1}}$ disallows 
\begin{equation}
2^{k_\mathtt{1}}\cdot 3^{n-k_{\mathtt{1}}}
\end{equation}
signatures at each row $i\in \left[n\right]$. The supersets $\alpha_{\mathtt{i}}$, with $\mathtt{i}>\mathtt{1}$, further forbid signatures with no positive element in the columns $j\in\alpha_\mathtt{i}$ on the rows $i\in\alpha_{\mathtt{i}}\setminus\alpha_{\mathtt{i-1}}$. In total those are
\begin{equation}
2^{k_\mathtt{i}}\cdot 3^{n-k_\mathtt{i}}p_{\alpha_{\mathtt{i}}\setminus\alpha_{\mathtt{i-1}}}^i\,.
\end{equation}
The signatures
\begin{equation}
(\underbrace{0,\ldots,0}_{\alpha_{\mathtt{{1}}}},\underbrace{*,\ldots,*}_{\alpha_{\mathtt{i}}\setminus\alpha_{\mathtt{1}}},\underbrace{*,\ldots,*}_{\alpha_{\mathtt{i}}^c})
\end{equation}
are disallowed by the $\alpha_{\mathtt{{1}}}$ set. The subset of signatures that have no $``+"$ in $\alpha_\mathtt{i}\setminus\alpha_\mathtt{1}$, are forbidden by $\alpha_{\mathtt{{i}}}$ when $i\in\alpha_\mathtt{i}\setminus\alpha_\mathtt{i-1}$. There are hence $2^{k_\mathtt{i}-k_\mathtt{1}}\cdot 3^{n-k_\mathtt{i}}$ overlapping forbidden signatures we doubly accounted for. Correcting those, we arrive at the total the number of allowed signatures for row $i$:
\begin{equation}
\underbrace{3^n}_{\text{total}}-\underbrace{2^{k_\mathtt{1}}\cdot 3^{n-k_{\mathtt{1}}}}_{\text{no }``-/+"\text{ by }\alpha_{\mathtt{1}}} -\underbrace{\sum_{\mathtt{i}>\mathtt{1}}2^{k_\mathtt{i}}\cdot 3^{n-k_\mathtt{i}}p_{\alpha_{\mathtt{i}}\setminus\alpha_{\mathtt{i-1}}}^i}_{\text{no } ``-" \text{ in }i\,\in\,\alpha_{\mathtt{i}}\setminus\alpha_{\mathtt{i-1}} \text{ by }\alpha_{\mathtt{i}}}+\underbrace{\sum_{\mathtt{i}>\mathtt{1}}2^{k_\mathtt{i}-k_\mathtt{1}}\cdot 3^{n-k_\mathtt{i}}p_{\alpha_{\mathtt{i}}\setminus\alpha_{\mathtt{i-1}}}^i}_{\text{double counting correction}}\,.
\end{equation}
The total sum can be rewritten into
\begin{equation}
3^{n}\left(1-\left(\ft{2}{3}\right)^{k_\mathtt{1}}-\sum_{\mathtt{i}>\mathtt{1}}\left(1-2^{-k_\mathtt{1}}\right)\left(\ft{2}{3}\right)^{k_\mathtt{i}}p_{\alpha_{\mathtt{i}}\setminus\alpha_{\mathtt{i-1}}}^i\right)\,.
\end{equation}
\end{proof}
One can see in Figure \ref{fig:alspsn} that the allowed sign patterns of a network get strongly suppressed by the storage of multistable nested sets.
\begin{figure}[ht]
\centering
  \captionsetup{width=.8\textwidth}
\includegraphics[width=1\textwidth]{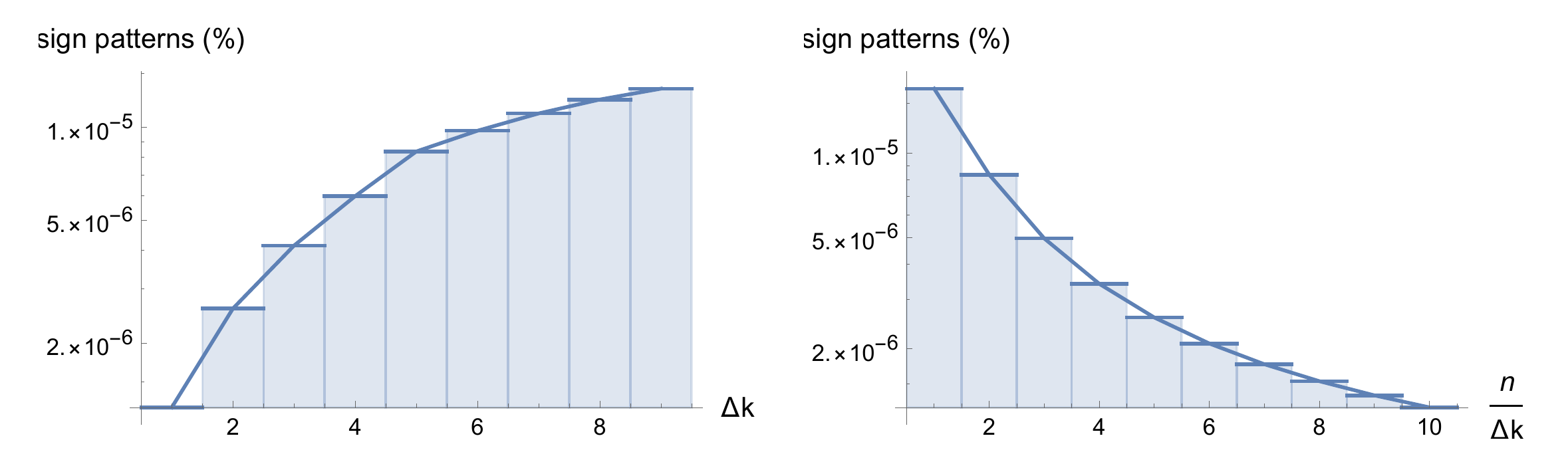}
\caption{The allowed sign patterns of sequences of nested stable sets are plotted. The sequences have fixed $\Delta k = k_{\mathtt{i+1}}-k_{\mathtt{i}}$ and the first set always has size $k_\mathtt{1}=1$. The right figure is obtained from left by inverting the argument into a density $n/\Delta k$.}
    \label{fig:alspsn}
\end{figure}
\begin{exmp}
Suppose a $2$-dim network has $\left\{1\right\}$ and $\left\{1,2\right\}$ as stable sets.
\begin{itemize}
\item By Theorem \ref{thm:ns} the allowed sign patterns are
\begin{equation} \label{spn}
\text{allows stability of }\left\{1\right\}\, \& \,\left\{1,2\right\}:
\begin{pmatrix}
1 & * \\
-1 & 1
\end{pmatrix}\,.
\end{equation}
\item There are at least two excitatory $(E)$ and one inhibitory $(I)$ connections in \eqref{spn}. This agrees with (and saturates the bounds in) Corollary \ref{cor:EIn}
\begin{align}
E&\geq k_{\mathtt{N}}=2\nonumber\\
I& \geq n-k_{\mathtt{1}}=1\,.
\end{align}
\item By Corollary \ref{cor:fsn} there are
\begin{equation}
3^{n}\Big(1-\left(\ft{2}{3}\right)^{k_\mathtt{1}}-\sum_{\mathtt{i}>\mathtt{1}}\left(1-2^{-k_\mathtt{1}}\right)\left(\ft{2}{3}\right)^{k_\mathtt{i}}p_{\alpha_{\mathtt{i}}\setminus\alpha_{\mathtt{{i-1}}}}^i\Big)=
\begin{cases}
3^2\left(1-\frac23\right)=3 &\text{ for }i=1\\
3^2\left(1-\frac23-\frac29\right)=1 &\text{ for }i=2\,.
\end{cases}
\end{equation}
allowed signatures for respectively the first and second row. Hence there are three allowed sign patterns. This is what we found in \eqref{spn}. Notice that we don't count the restrictions imposed by the constraint \eqref{nvi}, \eqref{nvi2} on the allowed sign patterns. More on this can be found in the Appendix \ref{app:bsp}.
\end{itemize}
\end{exmp}
\subsection{Sufficient Sign Patterns}
Sign patterns can require the stability of states independently from the strengths of the connections. The corresponding sets can be called \textit{sign stable}. The upcoming Definitions \ref{def:st}, \ref{def:rp} and Theorem \ref{thm:ssp} make this explicit.
\begin{definition} \label{def:st}
A set $\alpha$ is sign stable whenever it is required to be stable, i.e. stable for all $W^+$.
\end{definition}
\begin{definition} \label{def:rp}
A matrix $A$ is row positive whenever $A\geq 0$ and there is at least one positive element on each row.
\end{definition}
\begin{theorem} \label{thm:ssp}
A set $\alpha$ is sign stable iff $S_\alpha{\cal S}P_\alpha$ is row positive.
\end{theorem}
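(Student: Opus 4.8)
The plan is to convert "sign stable" into a single quantified inequality over all admissible positive matrices $W^+$ and then read off row positivity entry by entry. The natural starting point is the reformulation of the stable set condition already obtained in the proof of Theorem \ref{thm:nsp}: writing $W=W^+\circ{\cal S}$ and using $\left(A\circ B\right)\boldsymbol{x}=\mathrm{diag}\!\left(AD_xB^T\right)$, the condition \eqref{fpcond4} is equivalent to $\mathrm{diag}\!\left(W^+M^T\right)>0$ with $M:=S_\alpha{\cal S}P_\alpha$ and entries $M_{ij}=s_\alpha^i{\cal S}^{ij}p_\alpha^j$. Componentwise this reads
\[
\sum_j W^+_{ij}\,M_{ij}>0\qquad\text{for every row }i.
\]
By Definition \ref{def:st}, $\alpha$ is sign stable exactly when these row inequalities hold for \emph{every} positive matrix $W^+$. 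So the whole theorem reduces to showing that the row sums $\sum_j W^+_{ij}M_{ij}$ are positive for all positive choices of the weights $W^+_{ij}$ if and only if $M$ is row positive in the sense of Definition \ref{def:rp}.

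For the sufficiency direction I would fix a row $i$ and assume $M$ is row positive, so $M_{ij}\geq0$ for all $j$ and $M_{ij_0}>0$ for some index $j_0$. Since every $W^+_{ij}>0$, each summand $W^+_{ij}M_{ij}$ is nonnegative and the $j_0$ summand is strictly positive, whence $\sum_j W^+_{ij}M_{ij}\geq W^+_{ij_0}M_{ij_0}>0$. As this holds for every row and every positive $W^+$, the set $\alpha$ is stable for all $W^+$ and therefore sign stable.

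For necessity I would argue contrapositively, row by row, assuming $\alpha$ is sign stable. First, no row of $M$ can be entirely nonpositive, for if $M_{ij}\leq0$ for all $j$ in some row $i$ then any positive $W^+$ yields $\sum_j W^+_{ij}M_{ij}\leq0$, contradicting sign stability; hence each row carries at least one strictly positive entry. The hard part is ruling out a single negative entry even when the row already contains a positive one, and this is exactly where the full freedom over $W^+$ must be used rather than any fixed weight choice. Suppose $M_{ij_1}<0$ for some $i,j_1$. I would select the positive matrix with $W^+_{ij_1}=T$ and $W^+_{ij}=\epsilon$ for $j\neq j_1$, giving $\sum_j W^+_{ij}M_{ij}=T\,M_{ij_1}+\epsilon\sum_{j\neq j_1}M_{ij}$, which becomes negative once $T$ is taken large enough for fixed $\epsilon>0$ (because $M_{ij_1}<0$), again contradicting sign stability. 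Thus $M_{ij}\geq0$ everywhere, and combined with the previous step $M=S_\alpha{\cal S}P_\alpha$ is row positive, completing the equivalence.
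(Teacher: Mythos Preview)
Your proof is correct and follows essentially the same route as the paper: both start from the reformulation $\mathrm{diag}\!\left(W^+M^T\right)>0$ with $M=S_\alpha{\cal S}P_\alpha$, verify sufficiency by noting that row positivity makes each diagonal sum strictly positive, and establish necessity by exploiting the freedom in $W^+$ to force a nonpositive row sum whenever $M$ fails to be row positive. The only cosmetic difference is that you split the necessity direction into two sub-steps (first secure a positive entry per row, then exclude any negative entry), whereas the paper handles both failure modes in a single case distinction; the underlying idea of letting one weight dominate is identical.
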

\begin{proof}
Both directions of the implications easily follow from the result we found in \eqref{fpcond5}:
\begin{equation}
S_\alpha WP_\alpha\boldsymbol 1>0\quad \Leftrightarrow\quad \mathrm{diag}\left(W^+{\cal S}(\alpha)^T\right)> 0\,.
\label{fpcond9}
\end{equation}
where for convenience we defined ${\cal S}(\alpha)\equiv S_\alpha{\cal S}P_\alpha$.

$\Rightarrow.$ Suppose ${\cal S}(\alpha)$ is not row positive. Then there is a row $i\in\left[n\right]$ with at least one element $s_{ij}(\alpha)= -1$ for $j\in\alpha$ or all elements are zero. Taking $(w^+)_{ij}\geq\sum_{k\,\in\, \alpha\setminus\left\{j\right\}} w^+_{ik}$, we then find
\begin{equation}
\left(W^+{\cal S}(\alpha)^T\right)_{ii}= w^+_{ij}s_{ij}(\alpha)+\sum_{k\,\in\, \alpha\setminus\left\{j\right\}} w^+_{ik}s_{ik}(\alpha)\leq 0
\end{equation}
which is in contradiction with $\alpha$ being stable for all $W^+$.

$\Leftarrow.$ For each row $i\in\left[n\right]$, one element of all $s_{ij}(\alpha)\geq 0$ with $j\in\alpha$ is strictly positive. For generic $W^+$ we thus have
\begin{equation}
\left(W^+{\cal S}(\alpha)^T\right)_{ii}=\sum_{j\in\alpha}w^+_{ij}s_{ij}(\alpha)>0\,.
\end{equation}
By \eqref{fpcond9} it follows that $\alpha$ is stable for every $W^+$.
\end{proof}
This result is a variation of what is known in the literature as sign patterns that require semipositivity \cite{johnson_smith_tsatsomeros_2020,johnson1993qualitative,johnson1995sign}. 
\begin{exmp} \label{exmp:ssp}
The network in Example \ref{exmp1} has the following sign pattern
\begin{equation}
{\cal S}=
\begin{pmatrix}
1 & 1  \\
1 & 1  
\end{pmatrix}\,.
\end{equation}
It follows immediately that the matrix
\begin{equation}
S_{\left\{1,2\right\}}{\cal S}P_{\left\{1,2\right\}}=
\begin{pmatrix}
1 & 1  \\
1 & 1  \\
\end{pmatrix}
\end{equation}
is row positive. Hence the set $\left\{1,2\right\}$ is sign stable, i.e. stable for any $W^+$.
\end{exmp}
Both Theorems \ref{thm:nsp} \& \ref{thm:ssp} explicitly show what we alluded to at the end of the subsection \ref{sss:fpc}, i.e. sign patterns provide necessary and sometimes sufficient conditions for steady state dynamics and pattern memorization. Within the context of our nonlinear model we have now showed that stable states can be fully determined by the configuration of the connections, no matter what the strength of these connections are. It would be of interest to further study the conditions under which networks display sign stability. This might be an underexplored mechanism that real neural networks employ, i.e. using cell type configurations, to stabilize the network in ways that are not influenced by synaptic strengthening or weakening. 
\subsection{Minimal Sign Patterns} \label{sec:msp}
Certain stable sets have no proper stable subset. We will call these \textit{minimally stable} analogous to the definition in the context minimal semipositivity \cite{johnson1994semipositivity,werner1994characterizations}. 
\begin{definition} \label{def:ss}
A set $\alpha$ is minimally stable whenever it is stable and has no stable proper subsets.
\end{definition}
Only a subset of all the sign patterns will allow minimal stability and even a smaller subset will require it. The sign patterns that allow minimal semipositivity have not been fully identified \cite{johnson_smith_tsatsomeros_2020,johnson1995sign}. We will not try to answer this question in the context of the current work. When it comes to sign patterns that require minimal semipositivity the result is more straightforward. In our case the sign patterns that require minimal stability completely overlap with those that require stability.
\begin{theorem} \label{thm:msp}
A set $\alpha$ is required to be minimally stable iff it is sign stable.
\end{theorem}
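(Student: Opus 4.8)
The plan is to establish the two implications separately. The forward implication is immediate and the reverse one carries all the content, so I would spend the bulk of the argument there.

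First, for the direction that being required to be minimally stable implies sign stability: if every matrix in the sign pattern class ${\cal Q}({\cal S})$ has $\alpha$ as a \emph{minimally} stable set, then in particular each such matrix has $\alpha$ as a stable set, which is precisely the statement that $\alpha$ is required to be stable, i.e. sign stable (Definition \ref{def:st}). I would dispose of this in a single sentence.

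For the converse I would start from Theorem \ref{thm:ssp}: $\alpha$ sign stable is equivalent to $S_\alpha{\cal S}P_\alpha$ being row positive. The first step is to read off from row positivity that the internal block is sign-constrained: for $i,j\in\alpha$ the entry $s_\alpha^i{\cal S}^{ij}p_\alpha^j$ equals ${\cal S}^{ij}$ and must be nonnegative, so ${\cal S}^{ij}\in\{0,1\}$ and hence $w_{ij}\geq 0$ for \emph{every} $W^+$ in the class; in words, sign stability forces the $\alpha\times\alpha$ block of $W$ to be entrywise nonnegative. The second step is the subset argument: given any proper subset $\beta\subsetneq\alpha$, I would choose a witness $k\in\alpha\setminus\beta$ (nonempty precisely because the inclusion is proper). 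Since $k\notin\beta$ we have $s_\beta^k=-1$, so the $k$-th component of the stable set condition \eqref{fpcond4} for $\beta$ demands $\sum_{j\in\beta}w_{kj}<0$. But $k\in\alpha$ and $\beta\subseteq\alpha$, so the nonnegativity of the internal block gives $w_{kj}\geq 0$ for each $j\in\beta$, whence $\sum_{j\in\beta}w_{kj}\geq 0$ --- a contradiction. Thus $\beta$ violates \eqref{fpcond4} for every choice of strengths $W^+$, so $\alpha$ has no stable proper subset and is minimally stable for all $W^+$, i.e. required to be minimally stable.

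The conceptual content I would emphasize is that sign stability makes every unit of $\alpha$ receive nonnegative internal drive, so shrinking the active set can never flip an $\alpha$-unit off; this is exactly the obstruction to a stable substate. The one point I expect to require care, rather than the inequalities themselves, is the treatment of the empty set among the proper subsets: a stable set is defined through \eqref{fpcond4}, which excludes $\boldsymbol{p_\alpha}=\boldsymbol 0$ (the origin, flagged as the exceptional case in the Remark after \eqref{fpcond4}), so the genuinely stable origin of the zero-input system is not counted as a stable subset and the minimality claim is unaffected. I would state this convention explicitly so that the reverse direction is watertight.
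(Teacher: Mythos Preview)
Your proposal is correct and follows essentially the same approach as the paper. Both arguments extract from Theorem \ref{thm:ssp} that the $\alpha\times\alpha$ block of ${\cal S}$ is entrywise nonnegative, then observe that any row $k\in\alpha\setminus\beta$ therefore lacks the negative entry in the $\beta$-columns that stability of $\beta$ would require; the paper packages this last step as an appeal to Theorem \ref{thm:nsp} while you unwind it directly via \eqref{fpcond4}, and your explicit handling of the empty-set case is a clarification the paper leaves implicit.
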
 
\begin{proof} 
$\Rightarrow$. Follows by Definition \ref{def:st}.

$\Leftarrow$. Suppose $\alpha$ is sign stable. Take a proper subset $\beta\subsetneq\alpha$. By Theorem \ref{thm:ssp} the sign pattern we have that
\begin{equation}
S_\alpha{\cal S}P_\beta\geq 0\quad\Rightarrow\quad P_{\alpha\setminus\beta}{\cal S}P_\beta\geq 0\,.
\end{equation}
Hence on the rows $i\in\alpha\setminus\beta$ there is no element with negative sign in ${\cal S}P_\beta$. By Theorem \ref{thm:nsp} the subset $\beta$ is not allowed to be stable.
\end{proof}
\begin{exmp}
Returning to Example \ref{exmp1}, we verify that the network is required to be minimally stable for $\left\{1,2\right\}$. For the subsets $\left\{1\right\}$ and $\left\{2\right\}$ we have
\begin{equation}
S_{\left\{1\right\}}{\cal S}P_{\left\{1\right\}}=
\begin{pmatrix}
1 & 0  \\
-1 & 0 \\

\end{pmatrix}\,,\quad
S_{\left\{2\right\}}{\cal S}P_{\left\{2\right\}}=
\begin{pmatrix}
0 & -1  \\
0 & 1  \\
\end{pmatrix}\,. \label{minsp}
\end{equation}
Following from Theorem \ref{thm:nsp} the first/second column in respectively the first/second matrix in \eqref{minsp} should be strictly positive in order for $\left\{1\right\}$/$\left\{2\right\}$ to be allowed to be stable. Since this isn't the case, the network is minimally stable for the sets $\left\{1,2\right\}$, which coincides with Theorem \ref{thm:msp}.
\end{exmp}
\section{Factorization, (De)composability \& Coupling} \label{sec:fdi}
This section is split into three parts. In the first we'll present a factorization theorem for semipositive matrices. In the second we prove two main theorems. The first one is a composition theorem which formulates when two stable sets can give rise to a stable superset. The other is a decomposition theorem which gives the conditions under which a stable set allows for two stable subsets. In the last part we state the general coupling theorem between stable sets of which the composition and decomposition theorem are special cases. 
\subsection{Factorization} \label{subs:fac}
Semipositive matrices are characterized by a factorization theorem \cite{tsatsomeros2016geometric}. We provide a slightly modified version that lends itself to the context of stable sets.
\begin{lemma} \label{lem:fact}
The matrix $A\in M_{m,n}\left(\mathbb{R}\right)$ has a semipositivity vector $\boldsymbol{x}\geq 0$ iff there exist a nonnegative matrix $X\in M_{n}\left(\mathbb{R}\right)$ and positive matrix $Y\in M_{m,n}\left(\mathbb{R}\right)$ such that the inverse $X^{-1}$ with seminonnegativity vector $\boldsymbol{x}$ exists and $A=YX^{-1}$.
\end{lemma}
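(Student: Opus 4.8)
The plan is to prove the two implications separately, observing that the ``$\Leftarrow$'' direction is essentially a one-line verification while the ``$\Rightarrow$'' direction is the real content and requires an explicit construction of the factorization. Throughout I keep in mind that a semipositivity vector of $A$ is an $\boldsymbol{x}\geq\boldsymbol{0}$ with $A\boldsymbol{x}>\boldsymbol{0}$ (so in particular $\boldsymbol{x}\neq\boldsymbol{0}$), while a seminonnegativity vector of $X^{-1}$ is an $\boldsymbol{x}\geq\boldsymbol{0}$, $\boldsymbol{x}\neq\boldsymbol{0}$ with $X^{-1}\boldsymbol{x}\geq\boldsymbol{0}$; the point of the statement is that the \emph{same} vector $\boldsymbol{x}$ plays both roles.

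For the ``$\Leftarrow$'' direction I would simply test the vector $\boldsymbol{x}$ itself. Setting $\boldsymbol{z}=X^{-1}\boldsymbol{x}$, the hypothesis that $\boldsymbol{x}$ is a seminonnegativity vector of $X^{-1}$ gives $\boldsymbol{z}\geq\boldsymbol{0}$, and since $X^{-1}$ is invertible and $\boldsymbol{x}\neq\boldsymbol{0}$ we also have $\boldsymbol{z}\neq\boldsymbol{0}$. Then $A\boldsymbol{x}=YX^{-1}\boldsymbol{x}=Y\boldsymbol{z}$, and because $Y$ is entrywise positive while $\boldsymbol{z}\geq\boldsymbol{0}$ is nonzero, each entry of $Y\boldsymbol{z}$ is strictly positive, so $A\boldsymbol{x}>\boldsymbol{0}$. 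Hence $\boldsymbol{x}$ is a semipositivity vector of $A$.

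For the harder ``$\Rightarrow$'' direction I would exhibit the factorization explicitly via the rank-one update
\begin{equation}
X=\epsilon\, I+\boldsymbol{x}\,\boldsymbol{1}^{T}\,,\qquad Y=AX\,,
\end{equation}
for a sufficiently small $\epsilon>0$, whose $j$-th column is the perturbed vector $\boldsymbol{x}+\epsilon\,\boldsymbol{e}_{j}$. Nonnegativity of $X$ is immediate from $\boldsymbol{x}\geq\boldsymbol{0}$ and $\epsilon>0$. Invertibility follows from the matrix determinant lemma, which gives $\det X=\epsilon^{\,n-1}\left(\epsilon+\boldsymbol{1}^{T}\boldsymbol{x}\right)>0$ since $\boldsymbol{x}\neq\boldsymbol{0}$ forces $\boldsymbol{1}^{T}\boldsymbol{x}>0$. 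The Sherman--Morrison formula then supplies the closed form $X^{-1}=\tfrac{1}{\epsilon}I-\tfrac{1}{\epsilon(\epsilon+\boldsymbol{1}^{T}\boldsymbol{x})}\,\boldsymbol{x}\,\boldsymbol{1}^{T}$, whence $X^{-1}\boldsymbol{x}=\boldsymbol{x}/(\epsilon+\boldsymbol{1}^{T}\boldsymbol{x})\geq\boldsymbol{0}$, so $\boldsymbol{x}$ is a seminonnegativity vector of $X^{-1}$. Finally, the $j$-th column of $Y=AX$ equals $A\boldsymbol{x}+\epsilon\,A\boldsymbol{e}_{j}$; since $A\boldsymbol{x}>\boldsymbol{0}$ strictly, a continuity argument shows that for all sufficiently small $\epsilon>0$ every such column remains positive, so $Y>\boldsymbol{0}$.

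The main obstacle is that $X$ must meet four competing requirements at once — nonnegativity, invertibility, positivity of $AX$, and $X^{-1}\boldsymbol{x}\geq\boldsymbol{0}$ — and a generic nonnegative invertible matrix will fail at least one. The rank-one ansatz $X=\epsilon I+\boldsymbol{x}\boldsymbol{1}^{T}$ is chosen precisely so that Sherman--Morrison produces a closed-form inverse rendering all four conditions transparent simultaneously. The only quantitative point left to pin down is the threshold on $\epsilon$ below which $AX>\boldsymbol{0}$, and that is controlled by the strict inequality $A\boldsymbol{x}>\boldsymbol{0}$ together with the finiteness of the number of entries one must keep positive.
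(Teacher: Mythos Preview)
Your proof is correct and essentially identical to the paper's: both construct $X=\epsilon I+\boldsymbol{x}\,\boldsymbol{1}^{T}$ and $Y=AX=\boldsymbol{y}\,\boldsymbol{1}^{T}+\epsilon A$, then take $\epsilon$ small enough to keep $Y$ positive. The only cosmetic difference is that you establish invertibility and $X^{-1}\boldsymbol{x}=\boldsymbol{x}/(\epsilon+\boldsymbol{1}^{T}\boldsymbol{x})$ via the matrix determinant lemma and Sherman--Morrison, whereas the paper reads off the same facts from the eigenstructure $X\boldsymbol{x}=(\boldsymbol{1}^{T}\boldsymbol{x}+\epsilon)\boldsymbol{x}$ and $X\boldsymbol{z}=\epsilon\boldsymbol{z}$ for $\boldsymbol{z}\perp\boldsymbol{1}$.
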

\begin{proof}
$\Rightarrow$. By definition there exists a positive vector $\boldsymbol{y}$ such that $A\boldsymbol{x}=\boldsymbol{y}$. The matrices $X$, $Y$ are constructed as
\begin{equation}
X=\boldsymbol{x}\boldsymbol{1_n}^T+\epsilon \mathds{1_n}\,,\quad Y=\boldsymbol{y}\boldsymbol{1_n}^T+\epsilon A
\label{XandY}
\end{equation}
with $\epsilon>0$ chosen small enough such that $Y$ is positive. Notice that $AX=Y$. The matrix $X$ is invertible with positive eigenvalues and eigenvectors
\begin{equation}
X \boldsymbol{x}=\big(\sum_i x^i+\epsilon\big)\boldsymbol{x}\,,\quad X \boldsymbol{z}=\epsilon\boldsymbol{z} \quad\text{for }\boldsymbol{z}\bot\boldsymbol{1}\,.
\label{eigX}
\end{equation}
It immediately follows from \eqref{eigX} that $\boldsymbol{x}$ is a seminonnegativity vector of $X^{-1}$.

$\Leftarrow$. Because $\boldsymbol{x}$ is a seminonnegativity vector for $X^{-1}$ we have that $X^{-1}\boldsymbol{x}\geq \boldsymbol{0}$ and since $Y$ is positive and $\boldsymbol{x}\neq\boldsymbol{0}$ this implies that $YX^{-1}\boldsymbol{x}>\boldsymbol{0}$.
\end{proof}
By the semipositivity condition in \eqref{fpcond4} this has the following consequences for a weight matrix with a stable set. 
\begin{theorem}\label{thm:fac} The following conditions are equivalent:
\begin{enumerate}
\item The set $\alpha$ is stable.
\item There is a nonnegative matrix $X$ and positive matrix $Y$ such that the inverse $X^{-1}$ with seminonnegativity vector $\boldsymbol{p_\alpha}$ exists and $W=S_\alpha Y X^{-1}$.
\item For the principal submatrix $W[\alpha]$ and row complement $W[\alpha^c,\alpha]$ there are positive matrices $X_\alpha, X_{\alpha^c},Y_\alpha,$ and negative $Y_{\alpha^c}$ such that the inverses $X_\alpha^{-1}$, $X^{-1}_{\alpha^c}$ with semipositivity vector $\boldsymbol{1}$ exist and $W[\alpha]=Y_\alpha X_\alpha^{-1}$, $W[\alpha^c,\alpha]=Y_{\alpha^c} X_{\alpha^c}^{-1}$.
\end{enumerate}
\end{theorem}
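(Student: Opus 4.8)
The plan is to prove the three-way equivalence by establishing (1)$\Leftrightarrow$(2) and (1)$\Leftrightarrow$(3) separately, in each case rewriting the stable set condition \eqref{fpcond4} as a bare semipositivity statement to which Lemma \ref{lem:fact} applies verbatim. The only ingredients I need beyond the lemma are that the signature matrix is an involution, $S_\alpha^2=\mathds{1}$, and that $P_\alpha\boldsymbol{1}=\boldsymbol{p_\alpha}$, so that the stable set condition is simply $S_\alpha W\boldsymbol{p_\alpha}>\boldsymbol{0}$.

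For (1)$\Leftrightarrow$(2) I would read $S_\alpha W\boldsymbol{p_\alpha}>\boldsymbol{0}$ as the assertion that the single matrix $A=S_\alpha W$ is semipositive with semipositivity vector $\boldsymbol{p_\alpha}\geq\boldsymbol{0}$. Lemma \ref{lem:fact}, applied with $\boldsymbol{x}=\boldsymbol{p_\alpha}$, then gives in both directions a nonnegative $X$ and positive $Y$ with $X^{-1}$ admitting $\boldsymbol{p_\alpha}$ as a seminonnegativity vector and $A=YX^{-1}$. Left-multiplying $S_\alpha W=YX^{-1}$ by $S_\alpha$ and using $S_\alpha^2=\mathds{1}$ turns this into $W=S_\alpha YX^{-1}$, which is statement (2); the converse reading is immediate since all steps are reversible.

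For (1)$\Leftrightarrow$(3) the decisive move is to split the inequality $S_\alpha WP_\alpha\boldsymbol{1}>\boldsymbol{0}$ by row index. On the rows $i\in\alpha$ the signature is $+1$ and the inequality collapses to $W[\alpha]\boldsymbol{1}>\boldsymbol{0}$, so $W[\alpha]$ is semipositive with the strictly positive vector $\boldsymbol{1}$; on the rows $i\in\alpha^c$ the signature is $-1$ and the inequality reads $W[\alpha^c,\alpha]\boldsymbol{1}<\boldsymbol{0}$, so $-W[\alpha^c,\alpha]$ is semipositive, again with vector $\boldsymbol{1}$. Applying Lemma \ref{lem:fact} to each block yields $W[\alpha]=Y_\alpha X_\alpha^{-1}$ and $-W[\alpha^c,\alpha]=\tilde Y X_{\alpha^c}^{-1}$ with $Y_\alpha,\tilde Y$ positive and $X_\alpha,X_{\alpha^c}$ positive; putting $Y_{\alpha^c}=-\tilde Y$ supplies the negative factor of (3). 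The converse runs backwards: positivity of $Y_\alpha$ together with $X_\alpha^{-1}\boldsymbol{1}>\boldsymbol{0}$ forces $W[\alpha]\boldsymbol{1}>\boldsymbol{0}$, while negativity of $Y_{\alpha^c}$ together with $X_{\alpha^c}^{-1}\boldsymbol{1}>\boldsymbol{0}$ forces $W[\alpha^c,\alpha]\boldsymbol{1}<\boldsymbol{0}$, and recombining the two row blocks reconstitutes $S_\alpha WP_\alpha\boldsymbol{1}>\boldsymbol{0}$.

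The subtlety I expect to watch most carefully is the difference in \emph{vector type} between (2) and (3), which is not cosmetic. In (2) the semipositivity vector is $\boldsymbol{p_\alpha}$, which has zero entries; hence the construction $X=\boldsymbol{p_\alpha}\boldsymbol{1}^T+\epsilon\mathds{1}$ of Lemma \ref{lem:fact} is only nonnegative and $X^{-1}\boldsymbol{p_\alpha}=(k+\epsilon)^{-1}\boldsymbol{p_\alpha}$ is merely seminonnegative. Once one restricts to the blocks of (3), the relevant vector is the all-ones $\boldsymbol{1}$, which is strictly positive, so $X_\alpha=\boldsymbol{1}\boldsymbol{1}^T+\epsilon\mathds{1}$ is genuinely positive and $X_\alpha^{-1}\boldsymbol{1}$ is strictly positive, i.e. a true semipositivity vector. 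I would therefore check that every qualifier in (3) — positivity of $X_\alpha,X_{\alpha^c},Y_\alpha$, negativity of $Y_{\alpha^c}$, and \emph{semipositivity} (not merely seminonnegativity) of $\boldsymbol{1}$ for the inverses — is matched exactly by the explicit construction in the lemma specialized to the strictly positive vector $\boldsymbol{1}$.
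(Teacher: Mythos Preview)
Your proposal is correct and follows essentially the same route as the paper: for (1)$\Leftrightarrow$(2) you apply Lemma~\ref{lem:fact} to $A=S_\alpha W$ with $\boldsymbol{x}=\boldsymbol{p_\alpha}$ and then use $S_\alpha^2=\mathds{1}$, and for (1)$\Leftrightarrow$(3) you split the stable set condition into the two row blocks $W[\alpha]\boldsymbol{1}>\boldsymbol{0}$ and $W[\alpha^c,\alpha]\boldsymbol{1}<\boldsymbol{0}$ before invoking the lemma on each. Your additional care about the seminonnegativity-versus-semipositivity distinction (statement (2) versus statement (3)) is well placed and in fact spells out what the paper leaves implicit when it appeals to ``the construction in \eqref{XandY}''.
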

\begin{proof}\leavevmode
\begin{itemize}
\item $(1)\Leftrightarrow(2)$. By the stable set condition \eqref{fpcond4}, $\alpha$ is a stable set iff $S_\alpha W \boldsymbol{p_\alpha}>\boldsymbol{0}$. By Lemma \ref{lem:fact} and since $S_\alpha^2=\mathds{1}$ both directions of the implication are proved.
\item $(1)\Leftrightarrow(3)$. The stable set condition decomposed in terms of submatrices is $W[\alpha]\,\boldsymbol{1}>\boldsymbol{0}$ and $W[\alpha^c,\alpha]\,\boldsymbol{1}<\boldsymbol{0}$. Both directions of the proof follow immediately from Lemma \ref{lem:fact}. Notice that by construction in \eqref{XandY} the matrices $X_\alpha, X_{\alpha^c},Y_\alpha$ are positive and $Y_{\alpha^c}$ is negative. 
\end{itemize}
\end{proof}
\begin{exmp}
Since the set $\left\{1,2,3\right\}$ is stable in the network of Example \ref{exmp2} we can construct the positive matrices
\begin{equation}
X=
\frac12\begin{pmatrix}
3 & 2 & 2\\
2 & 3 & 2 \\
2 & 2 & 3
\end{pmatrix}\quad \&\quad
Y=
\frac12\begin{pmatrix}
4 & 2 & 1 \\
2 & 4 & 1 \\
2 & 2 & 3 
\end{pmatrix}\,.
\end{equation}
Notice that the inverse of matrix $X$ is semipositive for $\boldsymbol{p}_{\left\{1,2,3\right\}}$
\begin{equation}
X^{-1}\boldsymbol{p}_{\left\{1,2,3\right\}}=
\frac17
\left(
\begin{array}{ccc}
 10 & -4 & -4 \\
 -4 & 10 & -4 \\
 -4 & -4 & 10 \\
\end{array}
\right)
\begin{pmatrix}
1 \\
1\\
1
\end{pmatrix}
=\frac27
\begin{pmatrix}
1 \\
1\\
1
\end{pmatrix}
>\boldsymbol{0}
\end{equation}
and that
\begin{equation}
YX^{-1}=
\left(
\begin{array}{ccc}
 2 & 0 & -1 \\
 0 & 2 & -1 \\
 0 & 0 & 1 \\
\end{array}
\right)
\end{equation}
is equal to $W$ in \eqref{W2}.
\end{exmp}
\subsection{(De)composability} \label{subs:dc}
The existence of a stable set $\alpha$ is not on its own a sufficient or necessary condition for the existence of stable sub- or supersets. Only in the case where the weight matrix is assumed to have a certain structure (as in Section \ref{sec:msp}) direct implications are possible. In this section we will look at the sufficient and necessary conditions on the weight matrix in order for stable sets to (de)compose into stable (sub-)supersets. These are of interest for the pattern completing capabilities of the network. The following matrix structures will be of relevance.
\begin{definition} \label{def:blockdd}
A matrix $W$ is $\alpha$-block diagonally dominant whenever 
\begin{equation}
\left|\sum_{j\in\alpha} w_{ij}\right|\geq \left|\sum_{j\not\in\alpha} w_{ij}\right| \text{ for } i\in\alpha
\end{equation}
and
\begin{equation}
\left|\sum_{j\not\in\alpha} w_{ij}\right|\geq \left|\sum_{j\in\alpha} w_{ij}\right| \text{ for } i\not\in\alpha
\end{equation}
\end{definition}
\begin{definition} \label{def:blockZ}
A matrix $W$ is an $\alpha$-block $Z$-matrix whenever 
\begin{equation}
\sum_{j\not\in\alpha} w_{ij}\leq 0 \text{ for } i\in\alpha
\end{equation}
and
\begin{equation}
\sum_{j\in\alpha} w_{ij}\leq 0 \text{ for } i\not\in\alpha\,.
\end{equation}
\end{definition}
\begin{remark}
In the context of our Glass model the Definitions \ref{def:blockdd},\ref{def:blockZ} allow simple expressions in terms of attractor points \eqref{fcp}
\begin{align}
\text{Definition } \ref{def:blockdd}:\;& s^i_\alpha\left|W_\alpha^i\right|+s^i_{\alpha^c}\left|W_{\alpha^c}^i\right|>0\,,\label{blddw}\\
\text{Definition } \ref{def:blockZ}:\;& W_{\alpha^c}^i< 0\text{ for }i\in\alpha\quad \&\quad W_\alpha^i< 0\text{ for }i\in\alpha^c\,.\label{blZw}
\end{align}
The inequalities in (\ref{blddw},\ref{blZw}) are be strict because of the assumption in $\left(\ref{nvi},\ref{nvi2}\right)$.
\end{remark}
The definition of $\alpha$-block diagonal dominance helps us formulate the following simple composition theorem.
\begin{theorem} \label{thm:comp}
Suppose  $\alpha$ \& $\beta$ are stable sets and $\alpha\cap\beta=\emptyset$ then $\gamma=\alpha\cup\beta$ is a stable set iff $W[\gamma]$ is $\alpha$-block diagonally dominant.
\end{theorem}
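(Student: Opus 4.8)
The plan is to reduce the stable-set condition for $\gamma$ to a statement involving only the rows indexed by $\gamma$, and then to convert the resulting strict inequalities into $\alpha$-block diagonal dominance by a direct sign computation that uses the separate stabilities of $\alpha$ and $\beta$. First I would record the additivity of attractor points: since $\alpha\cap\beta=\emptyset$ and $\gamma=\alpha\cup\beta$, the definition \eqref{fcp} gives $W_\gamma^i=W_\alpha^i+W_\beta^i$ for every row $i\in[n]$. Next I would dispose of the rows outside $\gamma$. By the stable-set condition \eqref{fpcond2} applied to $\alpha$ and to $\beta$ separately, any $i\notin\gamma$ satisfies $i\notin\alpha$ and $i\notin\beta$, hence $W_\alpha^i<0$ and $W_\beta^i<0$, so $W_\gamma^i<0$ automatically. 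Thus the part of the stable-set condition for $\gamma$ concerning rows $i\notin\gamma$ holds for free, and the only remaining requirement is $W_\gamma^i>0$ for all $i\in\gamma$; since for $i\in\gamma$ the quantity $W_\gamma^i=\sum_{j\in\gamma}w_{ij}$ involves only entries of the principal submatrix, this genuinely is a condition on $W[\gamma]$ alone. Hence $\gamma$ is stable iff $W_\gamma^i>0$ for every $i\in\gamma$.

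The remaining step is the sign-chase, carried out row by row for $i\in\gamma$, where within $\gamma$ the complement of $\alpha$ is exactly $\beta$. If $i\in\alpha$, then stability of $\alpha$ gives $W_\alpha^i>0$ and stability of $\beta$ gives $W_\beta^i<0$ (as $i\notin\beta$); writing $W_\alpha^i=|W_\alpha^i|$ and $W_\beta^i=-|W_\beta^i|$, additivity yields $W_\gamma^i=|W_\alpha^i|-|W_\beta^i|$, so $W_\gamma^i>0$ iff $|W_\alpha^i|>|W_\beta^i|$. Symmetrically, if $i\in\beta$ then $W_\gamma^i=|W_\beta^i|-|W_\alpha^i|$ and $W_\gamma^i>0$ iff $|W_\beta^i|>|W_\alpha^i|$. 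Combining the two cases, the requirement $W_\gamma^i>0$ for all $i\in\gamma$ is exactly $s_\alpha^i\big(|W_\alpha^i|-|W_\beta^i|\big)>0$ on $\gamma$, which is the attractor-point form \eqref{blddw} of $\alpha$-block diagonal dominance for $W[\gamma]$. Since every implication in the reduction and in the sign-chase is an equivalence, the two conditions are equivalent, proving the theorem.

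I expect no serious obstacle: the entire content is the observation that the rows outside $\gamma$ are automatically controlled by the individual stabilities of $\alpha$ and $\beta$, after which the equivalence is a one-line sign computation. The only points demanding care are bookkeeping ones. First, within $\gamma$ one has $s_\beta^i=-s_\alpha^i$, which is what collapses the two cases $i\in\alpha$ and $i\in\beta$ into the single inequality in \eqref{blddw}. Second, the definition \ref{def:blockdd} is stated with ``$\geq$'', whereas stability of $\gamma$ needs $W_\gamma^i>0$ strictly; I would resolve this by working with the strict attractor-point reformulation \eqref{blddw}, whose strictness is precisely what the nonvanishing-output constraint \eqref{nvi}, \eqref{nvi2} guarantees in the Glass-network setting, so that ``$W_\gamma^i=0$'' is excluded and the strict block-dominance inequalities are the correct characterization.
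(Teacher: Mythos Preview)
Your proof is correct and follows essentially the same route as the paper: both arguments use the additivity $W_\gamma^i=W_\alpha^i+W_\beta^i$, dispose of rows $i\notin\gamma$ via the separate stabilities of $\alpha$ and $\beta$, and then rewrite $W_\gamma^i$ on $\gamma$ as $s_\alpha^i|W_\alpha^i|+s_\beta^i|W_\beta^i|$ to identify the stability condition with strict $\alpha$-block diagonal dominance of $W[\gamma]$. Your explicit remark about the strict-versus-nonstrict inequality (resolved via \eqref{blddw} and the nonvanishing-output constraint) is a useful clarification that the paper leaves to the reader.
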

\begin{proof}
$\Rightarrow$. Since $\alpha$ and $\beta$ are stable sets and $\alpha\cap\beta=\emptyset$, for $i\in\gamma$ we have by the definition in \eqref{fpcond4}
\begin{equation}
s^i_\alpha W_\alpha^i>0\,,\quad s^i_\beta W_\beta^i >0\,.
\label{signab}
\end{equation}
Because of the sign structure in \eqref{signab} we can write the attractor point of $\gamma$ as
\begin{equation}
W_\gamma^i=s^i_\alpha\left|W_\alpha^i\right|+s^i_\beta\left|W_\beta^i\right|\,.
\label{sumg}
\end{equation}
Notice that the set condition for $\gamma$ is satisfied for $i\in\gamma$ if only if \eqref{sumg} is positive. We come to the conclusion that, under the assumption that $\alpha$ and $\beta$ are stable sets, the stable set condition of $\gamma$ for $i\in\gamma$ becomes equivalent to $W[\gamma]$ being an $\alpha$-block diagonal dominant matrix. The only thing remaining to show is that 
\begin{equation}
W_\gamma^i< 0\text{ for } i\not\in\gamma\,.
\end{equation}
This is immediately satisfied by $\alpha$, $\beta$ being stable sets and hence
\begin{equation}
\text{ for } i\not\in\gamma:\;W_\alpha^i,\, W_\beta^i<0\;\Rightarrow\;W_\gamma^i=W_\alpha^i+W_\beta^i<0\,.
\end{equation}
\end{proof}
Theorem (\ref{thm:comp}) states that block diagonal dominance of the principal submatrix $W[\gamma]$ provides the necessary and sufficient structure for two disjoint stable sets to compose a stable superset. Notice that the block dominance condition is in and by itself independent from the stability conditions for the composing subsets. It is an extra constraint on the ``average" strength of the connections between the internally competing states. In the next theorem we'll use the definition of an $\alpha$-block $Z$-matrix to formulate the analogous decomposition theorem.

\begin{theorem}\label{thm:decomp}
Suppose $\gamma$ is a stable set then $\alpha$ \& $\beta$ with $\gamma=\alpha\cap\beta$ and $\alpha\cap\beta=\emptyset$ are stable sets iff $W[\gamma]$ is an $\alpha$-block $Z$-matrix and $W_\alpha^iW_\beta^i>0$ for $i\not\in\gamma$.
\end{theorem}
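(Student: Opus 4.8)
The plan is to run this as the exact dual of the composition Theorem~\ref{thm:comp}, replacing block diagonal dominance by the block $Z$-matrix condition, and to reduce everything to sign-bookkeeping on the attractor points (reading the hypothesis as the disjoint union $\gamma=\alpha\cup\beta$, dual to Theorem~\ref{thm:comp}). The one structural fact I would lean on throughout is that, because the union is disjoint, the defining sum \eqref{fcp} splits additively,
\begin{equation}
W_\gamma^i=W_\alpha^i+W_\beta^i\qquad\forall i\,.
\end{equation}
Before touching either implication I would rewrite the $\alpha$-block $Z$-matrix hypothesis on $W[\gamma]$ in attractor-point form using \eqref{blZw}: it says exactly that the cross terms are inhibitory, $W_\beta^i<0$ for $i\in\alpha$ and $W_\alpha^i<0$ for $i\in\beta$. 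With this dictionary in place, both directions become a case split over the three location classes of the row index $i$: namely $i\in\alpha$, $i\in\beta$, and $i\notin\gamma$.

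For the forward implication I would assume $\alpha$ and $\beta$ are stable and read off their stable set conditions \eqref{fpcond2}. Since the union is disjoint, a row $i\in\alpha$ automatically has $i\notin\beta$, so stability of $\beta$ gives $W_\beta^i<0$; symmetrically $i\in\beta$ gives $W_\alpha^i<0$. These two statements are precisely the block $Z$-matrix condition. For a row $i\notin\gamma$ both $\alpha$ and $\beta$ contribute inhibition, $W_\alpha^i<0$ and $W_\beta^i<0$, so their product is positive, which is the remaining hypothesis.

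For the converse I would assume $\gamma$ stable together with the two hypotheses and recover stability of $\alpha$ and $\beta$ via \eqref{fpcond2}. On a row $i\in\alpha$ I would combine $W_\gamma^i>0$ (from stability of $\gamma$) with $W_\beta^i<0$ (block $Z$) to conclude $W_\alpha^i=W_\gamma^i-W_\beta^i>0$, the internal-excitation condition for $\alpha$; the external-inhibition condition $W_\beta^i<0$ on this same row is handed to me directly by the block $Z$ condition. The row $i\in\beta$ is symmetric. Finally, on a row $i\notin\gamma$ I would use $W_\gamma^i<0$ together with $W_\alpha^iW_\beta^i>0$: the product condition forces $W_\alpha^i$ and $W_\beta^i$ to share a sign, and a common positive sign would contradict $W_\gamma^i=W_\alpha^i+W_\beta^i<0$, so both are negative, giving the external-inhibition conditions for both sets. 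Collecting the three cases yields $s_\alpha^iW_\alpha^i>0$ and $s_\beta^iW_\beta^i>0$ for every $i$, i.e.\ both sets are stable.

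I do not expect a genuine obstacle here; the work is entirely in keeping the sign bookkeeping honest. The only step requiring a little care is the $i\notin\gamma$ case of the converse, where the product hypothesis $W_\alpha^iW_\beta^i>0$ alone does not fix the common sign and must be combined with the strict negativity of $W_\gamma^i$ to rule out the both-positive alternative. This is exactly the role the product condition plays, and it is why that hypothesis, rather than a pointwise sign condition, is the right one to pair with the block $Z$ structure.
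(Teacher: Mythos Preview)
Your proposal is correct and follows essentially the same approach as the paper: split rows into $i\in\alpha$, $i\in\beta$, $i\notin\gamma$, use the additive decomposition $W_\gamma^i=W_\alpha^i+W_\beta^i$, and identify the block $Z$-matrix condition with the cross-term sign constraints via \eqref{blZw}. The only cosmetic difference is in the $i\notin\gamma$ case of the converse, where the paper multiplies out $W_\alpha^i W_\gamma^i=(W_\alpha^i)^2+W_\alpha^iW_\beta^i>0$ to force $W_\alpha^i<0$, while you argue the equivalent ``same sign, both positive contradicts $W_\gamma^i<0$''; your organization into explicit forward/backward directions for all three row classes is in fact cleaner than the paper's somewhat compressed treatment of the $i\in\gamma$ equivalence.
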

\begin{proof}
If $\gamma$ is a stable set we have for $i\in\alpha$ that
\begin{equation}
W_\gamma^i=W_\alpha^i+W_\beta^i>0
\end{equation}
and hence
\begin{equation}
W_\alpha^i>-W_\beta^i\,. \label{ineqab}
\end{equation}
Both the definition of stable $\beta$ and $W[\gamma]$ being an $\alpha$-block Z-matrix require that $W_\beta^i<0$ for $i\in\alpha$. In the context of \eqref{ineqab} this inequality immediately implies that $W_\alpha^i>0$ which is the stability condition of $\alpha$ for $i\in\alpha$. By complete analogy for $i\in\beta$, we can conclude that under the assumption of stable $\gamma$, the stable set conditions of $\alpha$ and $\beta$ for $i\in\gamma$ become equivalent to $W[\gamma]$ being an $\alpha$-block $Z$-matrix. We still have to show the equivalence for $i\not\in\gamma$:

$\Rightarrow$. If $\alpha$ and $\beta$ are stable then $W_\alpha^i,\, W_\beta^i<0$. Hence $W_\alpha^i W_\beta^i>0$.

$\Leftarrow$. Take the product
\begin{equation}
W_\alpha^i W_\gamma^i=\left(W_\alpha^i\right)^2+W_\alpha^i W_\beta^i>0\,.
\end{equation}
Since $W_\gamma^i$ is negative, it follows that $W_\alpha^i<0$. Similarly $W_\beta^i<0$.
\end{proof}
\subsection{Coupling} \label{subs:int}
Both the composition and decomposition theorems are special cases of a general state coupling theorem. The theorem is a direct consequence of the Boolean logic that is associated to partial orders. We state it below.\footnote{We express the theorem in terms of the Hadamard product between two vectors. It is defined identical to its operation on matrices
\begin{equation}
\left(S_\alpha\boldsymbol{W_\alpha}\circ S_\alpha\boldsymbol{W_\alpha}\right)^i\equiv\left(S_\alpha\boldsymbol{W_\alpha}\right)^i\left(S_\beta\boldsymbol{W_\beta}\right)^i\quad \forall i\in[n]\,.
\end{equation}
}
\begin{theorem}\label{thm:int}
Suppose $\alpha$ is a stable set then $\beta$ is a stable set iff $S_\alpha\boldsymbol{W_\alpha}\circ S_\beta\boldsymbol{W_\beta}>0$.
\end{theorem}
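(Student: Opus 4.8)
The plan is to reduce the biconditional to elementary sign arithmetic applied coordinate by coordinate. First I would restate both stability conditions in scalar form via the stable set condition \eqref{fpcond3}: a set is stable exactly when its signed attractor point is strictly positive in every coordinate. Concretely, $\alpha$ being stable means $(S_\alpha\boldsymbol{W_\alpha})^i = s_\alpha^i W_\alpha^i > 0$ for all $i\in[n]$, and $\beta$ being stable means $(S_\beta\boldsymbol{W_\beta})^i = s_\beta^i W_\beta^i > 0$ for all $i\in[n]$. By the footnote's definition of the Hadamard product of vectors, the hypothesis $S_\alpha\boldsymbol{W_\alpha}\circ S_\beta\boldsymbol{W_\beta}>0$ unpacks to $(S_\alpha\boldsymbol{W_\alpha})^i(S_\beta\boldsymbol{W_\beta})^i > 0$ for every $i$.

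With $\alpha$ assumed stable, the entire content of the theorem is a single observation used coordinatewise: for a real number $a > 0$ and an arbitrary real $b$, one has $ab > 0$ if and only if $b > 0$ (both fail when $b\le 0$, and in particular the strict inequality $ab>0$ forces $b>0$ rather than $b=0$). Taking $a=(S_\alpha\boldsymbol{W_\alpha})^i > 0$ and $b=(S_\beta\boldsymbol{W_\beta})^i$, this yields, for each fixed $i$, the equivalence $(S_\alpha\boldsymbol{W_\alpha})^i(S_\beta\boldsymbol{W_\beta})^i > 0 \Leftrightarrow (S_\beta\boldsymbol{W_\beta})^i > 0$. This is precisely the \emph{sign logic} underlying the statement's appeal to Boolean logic: once one factor is known positive, the product faithfully transmits the sign of the other factor.

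Finally I would conjoin these coordinatewise equivalences over all $i\in[n]$. The conjunction of the left-hand sides is exactly the Hadamard hypothesis $S_\alpha\boldsymbol{W_\alpha}\circ S_\beta\boldsymbol{W_\beta}>0$, while the conjunction of the right-hand sides is exactly the stable set condition $S_\beta\boldsymbol{W_\beta}>\boldsymbol{0}$, i.e. $\beta$ being a stable set. This delivers both directions of the desired equivalence at once.

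I do not expect a genuine obstacle; once the two stability conditions are written componentwise the result is immediate. The only point needing care is to confirm that the scalar characterization \eqref{fpcond3} genuinely applies to both $\alpha$ and $\beta$: this is guaranteed by the output constraint \eqref{nvi}, \eqref{nvi2}, which keeps every fixed point in the interior of its part and excludes the degenerate $\boldsymbol{p}_\emptyset=\boldsymbol{0}$ case, so no coordinate $s_\beta^i W_\beta^i$ is left merely nonnegative. With the theorem in hand, recovering the composition and decomposition results (Theorems \ref{thm:comp} and \ref{thm:decomp}) as special cases would amount to specializing $\beta$ relative to $\alpha$ and regrouping the factors of $S_\alpha\boldsymbol{W_\alpha}\circ S_\beta\boldsymbol{W_\beta}$, but that lies beyond the present statement.
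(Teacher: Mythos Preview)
Your proposal is correct and follows essentially the same approach as the paper: both arguments reduce the biconditional to the elementary fact that, since $S_\alpha\boldsymbol{W_\alpha}>0$ coordinatewise, the Hadamard product with $S_\beta\boldsymbol{W_\beta}$ is positive exactly when $S_\beta\boldsymbol{W_\beta}$ is. The paper phrases the $\Leftarrow$ direction as ``divide both sides by $S_\alpha\boldsymbol{W_\alpha}$'' while you spell out the scalar equivalence $a>0\Rightarrow(ab>0\Leftrightarrow b>0)$ and conjoin over $i$, but this is the same idea at the same level of detail.
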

\begin{proof}
$\Rightarrow$. This follows trivially from the stable set condition.

$\Leftarrow$. Since $\alpha$ is stable, we have $S_\alpha\boldsymbol{W_\alpha}>0$. Therefore we can divide both sides of the inequality $S_\alpha\boldsymbol{W_\alpha}\circ S_\beta\boldsymbol{W_\beta}>0$ by $S_\alpha\boldsymbol{W_\alpha}$ resulting in $S_\beta\boldsymbol{W_\beta}>0$ which is the stable set condition of $\beta$.
\end{proof}
The theorem can be easily extended, e.g. a stable $\alpha$ and $\beta$ will imply a stable $\gamma$ if and only if $S_\alpha\boldsymbol{W_\alpha}\circ S_\beta\boldsymbol{W_\beta}\circ S_\gamma\boldsymbol{W_\gamma}>0$. One can check that the application of Theorem \ref{thm:int} and its generalizations to the cases in Theorems \ref{thm:comp}, \ref{thm:decomp} will provide the conditions as expressed there.
\begin{exmp}
Take a two dimensional network with weight matrix
\begin{equation}
W=
\begin{pmatrix}
a & b \\
c & d 
\end{pmatrix}\,.
\end{equation}
Suppose that the weights are constrained to satisfy
\begin{equation}
S_{\left\{1\right\}}\boldsymbol{W_{\left\{1\right\}}}\circ S_{\left\{2\right\}}\boldsymbol{W_{\left\{2\right\}}}=
-\begin{pmatrix}
ab \\
cd
\end{pmatrix}
>\boldsymbol{0}\,.\label{interc}
\end{equation}
If $\left\{1\right\}$ is a stable set of the network then by the stability condition \eqref{fpcond4} we must have that $a>0$ and $c<0$. Therefore by the constraint in \eqref{interc} we must have that $b<0$ and $d>0$. These are the requirements for \eqref{fpcond4} to be satisfied by the set $\left\{2\right\}$.
\end{exmp}
\section{Conclusion} \label{sec:con}
In this article we studied a rate-based neural circuit model with Heaviside step activation function also known as Glass networks \cite{edwards2000analysis}. The piecewise linearity allows us to predict steady state dynamical behaviour without integrating or carrying out any stability analysis. We showed that the necessary and sufficient conditions for multistability can be formulated in terms of matrix positivity constraints on the synaptic connection weights \eqref{fpcond4}. These constraints have a direct impact on the sign patterns/neuronal configurations the network can possess. We formulated the necessary excitatory/inhibitory connections for several classes of families of stable sets (\cref{thm:nsp,thm:stsg,thm:ns}). The class of sufficient signed graphs for steady state dynamics was also identified (\cref{thm:ssp}). We show that this class is equivalent to the class of patterns that require the absence of stable substates (\cref{thm:msp}). Glass networks with stable steady states obey a uniquely identifying weight matrix factorization theorem (\cref{thm:fac}). We analyzed the conditions under which stable states can compose/decompose into stable macro-/microstates (dis-)allowing pattern completion (\cref{thm:comp,thm:decomp}). We ended with a general state coupling theorem which shows how stable states can become coupled and hereby put constraints on the pattern storage capacities of the network (\cref{thm:int}).\\

Multistability as a property of neural network activity is recognized to be of critical importance in many behavioral and cognitive functions of the nervous system \cite{la2019cortical,brinkman2022metastable}. First and foremost as a mechanism for associative memory encoding and retrieval as modelled by Hopfield \cite{hopfield1982neural}. Although general conditions for stability can be derived, the analysis of multiple stationary solutions in Hopfield-type networks is complicated by the continuous sigmoidal form of the activation functions \cite{cheng2006multistability}. A substitution by hard-switching functions, as in Glass networks, allows for full analytical control, but might come at the cost of biological plausibility. It has been shown however that Glass networks behave very similar to networks with steep sigmoid functions and the dynamical complexity typically gets reduced at lower gain \cite{sompolinsky1988chaos,edwards2000analysis,lewis1991steady,glass1978prediction}.\\

Our results are complementary to the work on multistability in TLNs. For comparison a distinction has to be made between \textit{admissible} and \textit{permitted} sets \cite{curto2020combinatorial}. Sets are called admissible when they contain their attractor point. Sets are called \textit{permitted} when for some input $\boldsymbol{\mu}$ the set contains a stable state. Since one can always find an input for which the attractor point lies within a specific set, whether or not a set is permitted completely depends on the stability of the fixed point. In TLNs, only when a set is admissible and permitted does it contain a stable steady state. This is contrasted by Glass networks where all sets are permitted because of the triviality of the stability analysis. It has been shown that for TLNs with symmetric weight matrix, the condition for the existence of multiple permitted sets can be formulated in terms of a positivity constraint on the weights of the connectivity matrix \cite{hahnloser2000permitted}. As far as we know, it has not yet been shown that such a positivity constraint also exists for determining the presence of admissible sets. By focusing on Glass networks, our work answers this question positively.\\

The semipositivity constraint stems from the relation between transformations preserving polyhedral cones and semipositive maps \cite{tsatsomeros2016geometric}. The discrete geometry of piecewise smooth networks was already explored in other work \cite{curto2020combinatorial,edwards2005matrices,farcot2006geometric}. It is an interesting question whether other dynamical attractors such as limit cycles or chaotic (strange) attractors can also be formulated in terms of simple algebraic constraints on the parameters of the network. Our result on multistability confirms the effectiveness of a geometric approach to this question.\\

Finally, we want to mention that while our work focused on the relation between (clustered) neural architecture and multistable dynamics within the context of a deterministic system, realistic neural networks must be able to function in the presence of stochastic noise \cite{brinkman2022metastable,litwin2012slow}. In networks with many local minima, noise fluctuations can cause the system to escape the basin of attraction of one stable fixed point to another \cite{bressloff2010metastable}. These metastable transitions affect the robustness and sensitivity of a neural system which in turn could provide additional constraints on the structure of the network connections. For example it is not only for the well-functioning of associative memory but also for working memory and decision making crucial to have metastable dwelling times within specific ranges. A non-equilibrium landscape and flux approach \cite{brinkman2022metastable} to Glass networks is therefore of importance. A natural next step would be to study whether also in a stochastic framework Glass networks are able to offer a simple and analytically tractable way to understand the relation between neural function and structure. 

\begin{appendix}
\section{Input/Output Equivalence} \label{app:nioe}
The nonlinear network model 
\begin{equation}
\dot y^i =-y^i + \theta\left( W^i{}_j\, y^j\right)
\label{bmodel}
\end{equation}
differs from \eqref{model} by the position of the activation function on the output generated by the states $y^i$. The phase space of \eqref{bmodel} is hereby constrained to the positive orthant $\mathbb{R}^{n}_+$. However the model in \eqref{model} has the advantage that the parts of the partition are the orthants and are hence independent of the weights. This simplifies the visualizations of the model. It has been known for a long time that both models are dynamically equivalent \cite{feng1996qualitative}. In fact the network constraints associated to multistability are exactly the same. This is easily checked. We first define the parts
\begin{equation}
{\cal P}_\alpha^\prime\equiv\left\{y^i\,|\,\theta\left(W^i{}_j\, y^j\right)=p_\alpha^i\right\}\,.
\label{Pprime}
\end{equation} 
The fixed points of \eqref{bmodel} are the solutions of
\begin{equation}
y^{*}_\alpha{}^i=\theta\left( W^i{}_j\, y^*_\alpha{}^j\right)\,.
\end{equation}
By definition \eqref{Pprime} this translates into
\begin{equation}
p_\alpha^i=\theta\left( W^i{}_j\, p_\alpha^j\right)\,.
\end{equation}
Under the assumption of the constraint in \eqref{nvi}, \eqref{nvi2} the condition for a stable set is
\begin{equation}
S_\alpha W \boldsymbol{p_\alpha}=S_\alpha W P_\alpha\, \boldsymbol{1}>\boldsymbol{0}\,,
\end{equation}
which is the same as \eqref{fpcond4}. All results of Sections \ref{sec:sp} and \ref{sec:fdi} are deduced from the stable set condition. Therefore the theorems therein can directly be applied to model \eqref{bmodel}.
\section{Sign Patterns and Output Constraint} \label{app:bsp}
We didn't control for the constraint \eqref{nvi}, \eqref{nvi2} in the calculation of the number of allowed sign patterns of the network in Corollaries \ref{cor:fs}, \ref{cor:fsd} and \ref{cor:fsn}. The constraint on its own disallows a portion of the parameter space. If included in the analysis one has to make the distinction between the cases of vanishing and nonvanishing input. For nonvanishing input the allowed row signatures are simply those of the theorems but for dimension $n-1$ because of the hyperplane projection. In the case of vanishing input the constraint dissallows all sign pattern that have zero connections. The additional constraining effect of stable states is then only on the positive or negative sign of the connection. We state these results below:
\begin{itemize}
\item Vanishing input:
\begin{equation}
\# \text{signatures}=
\begin{cases}
\text{Cor. \ref{cor:fs}: }& 2^n\big(1-\left(\ft{1}{2}\right)^{k}\big)\\
\text{Cor. \ref{cor:fsd}: }& 2^n\prod_{\mathtt{i}\in \mathtt{I}}\big(1-\left(\ft{1}{2}\right)^{k_\mathtt{i}}\big) \\
\text{Cor. \ref{cor:fsn}: }& 2^{n}\left(1-\left(\ft{1}{2}\right)^{k_\mathtt{1}}-\sum_{\mathtt{i}>\mathtt{1}}\left(\ft{1}{2}\right)^{k_\mathtt{i}}p_{\alpha_{\mathtt{i}}\setminus\alpha_{\mathtt{i-1}}}^i\right)
\end{cases}\,,
\end{equation}
\item Nonvanishing input:
\begin{equation}
\# \text{signatures}=
\begin{cases}
\text{Cor. \ref{cor:fs}:}& 3^{n-1}\big(1-\left(\ft{2}{3}\right)^{k-1}\big)\\
\text{Cor. \ref{cor:fsd}:}& 3^{n-1}\prod_{\mathtt{i}\in \mathtt{I}}\big(1-\left(\ft{2}{3}\right)^{k_\mathtt{i}-1}\big) \\
\text{Cor. \ref{cor:fsn}:}& 3^{n-1}\left(1-\left(\ft{2}{3}\right)^{k_\mathtt{1}-1}-\sum_{\mathtt{i}>\mathtt{1}}\left(1-2^{-k_\mathtt{i}+1}\right)\left(\ft{2}{3}\right)^{k_\mathtt{i}-1}p_{\alpha_{\mathtt{i}}\setminus\alpha_{\mathtt{i-1}}}^i\right)
\end{cases}\,.
\,.
\end{equation}
\end{itemize}
\end{appendix}
\providecommand{\href}[2]{#2}\begingroup\raggedright\endgroup

\end{document}